\documentclass[sigconf]{acmart}

\AtBeginDocument{%
  \providecommand\BibTeX{{%
    \normalfont B\kern-0.5em{\scshape i\kern-0.25em b}\kern-0.8em\TeX}}}

\copyrightyear{2023}
\acmYear{2023}
\setcopyright{acmlicensed}
\acmConference[KDD '23] {Proceedings of the 29th ACM SIGKDD Conference on Knowledge Discovery and Data Mining}{August 6--10, 2023}{Long Beach, CA, USA.}
\acmBooktitle{Proceedings of the 29th ACM SIGKDD Conference on Knowledge Discovery and Data Mining (KDD '23), August 6--10, 2023, Long Beach, CA, USA}
\acmPrice{15.00}
\acmISBN{979-8-4007-0103-0/23/08}
\acmDOI{10.1145/3580305.3599388}

\settopmatter{printacmref=true}




\usepackage{url}
\usepackage{bbm}
\usepackage{dsfont}
\usepackage{graphicx}
\usepackage{amsmath}
\usepackage{multirow}
\usepackage{longtable}
\usepackage{color}
\usepackage{rotating}
\usepackage{makecell}
\usepackage{array}
\usepackage{booktabs}
\usepackage{colortbl}
\usepackage{arydshln}

\usepackage{float}		
\usepackage{amsmath}        
\usepackage{graphicx}
\usepackage{multirow}
\usepackage{rotating}      
\usepackage{arydshln}		
\usepackage{wrapfig}       
\usepackage{array}         
\usepackage{enumitem}

\usepackage{tabu}  
\usepackage{bm}            
\usepackage{graphicx}
\usepackage{subfigure}
\usepackage{balance}

\usepackage{amsthm} 		
\newtheorem{theorem}{Theorem}

\newtheorem{definition}{Definition}
\newtheorem{proposition}{Proposition}

\begin{document}

\copyrightyear{2023}
\acmYear{2023}
\setcopyright{acmlicensed}\acmConference[KDD '23]{Proceedings of the 29th ACM SIGKDD Conference on Knowledge Discovery and Data Mining}{August 6--10, 2023}{Long Beach, CA, USA}
\acmBooktitle{Proceedings of the 29th ACM SIGKDD Conference on Knowledge Discovery and Data Mining (KDD '23), August 6--10, 2023, Long Beach, CA, USA}
\acmPrice{15.00}
\acmDOI{10.1145/3580305.3599388}
\acmISBN{979-8-4007-0103-0/23/08}

\title{Improving Expressivity of GNNs with Subgraph-specific Factor Embedded Normalization}

\author{Kaixuan Chen}
\authornote{Authors contributed equally to this research. E-mail: chenkx@zju.edu.cn}
\orcid{1234-5678-9012}
\author{Shunyu Liu}
\authornotemark[1]
\author{Tongtian Zhu}
\authornotemark[1]
\affiliation{%
  \institution{College of Computer Science, Zhejiang University}
  \city{Hangzhou}
  \country{China}
}

\author{Ji Qiao}
\affiliation{%
  \institution{China Electric Power Research Institute}
  \city{Beijing}
  \country{China}
}

\author{Yun Su}
\author{Yingjie Tian}
\affiliation{%
  \institution{State Grid Shanghai Municipal Electric Power Company}
  \city{Shanghai}
  \country{China}
}

\author{Tongya Zheng}
\author{Haofei Zhang}
\affiliation{%
  \institution{College of Computer Science, Zhejiang University}
  \city{Hangzhou}
  \country{China}
}

\author{Zunlei Feng}
\affiliation{%
  \institution{College of Software, Zhejiang University}
  \city{Hangzhou}
  \country{China}
}

\author{Jingwen Ye}
\authornote{Corresponding author. E-mail: yejingwen@zju.edu.cn}
\author{Mingli Song}
\affiliation{%
  \institution{College of Computer Science, Zhejiang University}
  \city{Hangzhou}
  \country{China}
}

\renewcommand{\shortauthors}{Kaixuan Chen, et al.}

\begin{abstract}
Graph Neural Networks~(GNNs) have emerged as a powerful category of learning architecture for handling graph-structured data. 
However, existing GNNs typically ignore crucial structural characteristics in node-induced subgraphs, which thus limits their expressiveness for various downstream tasks.
In this paper, we strive to strengthen the representative capabilities of GNNs by devising a dedicated plug-and-play normalization scheme, termed as~\emph{\textbf{SU}bgraph-s\textbf{PE}cific Facto\textbf{R} Embedded Normalization}~(SuperNorm), that explicitly considers the intra-connection information within each node-induced subgraph. 
To this end, we embed the subgraph-specific factor at the beginning and the end of the standard BatchNorm, as well as incorporate graph instance-specific statistics for improved distinguishable capabilities. 
In the meantime, we provide theoretical analysis to support that, with the elaborated SuperNorm, an arbitrary GNN is at least as powerful as the 1-WL test in distinguishing non-isomorphism graphs. 
Furthermore, the proposed SuperNorm scheme is also demonstrated to alleviate the over-smoothing phenomenon.
Experimental results related to predictions of graph, node, and link properties on the eight popular datasets demonstrate the effectiveness of the proposed method.
The code is available at \url{https://github.com/chenchkx/SuperNorm}.
\end{abstract}

\begin{CCSXML}
	<ccs2012>
	<concept>
		<concept_id>10010520.10010521.10010542.10010294</concept_id>
		<concept_desc>Computer systems organization~Neural networks</concept_desc>
		<concept_significance>500</concept_significance>
	</concept>
	</ccs2012>
\end{CCSXML}

\ccsdesc[500]{Computer systems organization~Neural networks}

\keywords{graph neural networks, graph normalization, subgraph-specific factor, graph isomorphism test, oversmoothing issue}



\maketitle

\section{Introduction}
Deep neural networks~(DNNs) constitute a class of machine learning algorithms to learn representations of various data~\cite{he2016deep,ye2023multi,wang2023u,xu2023learning,chen2023localm,ma2023llmpruner,xu2023toward,liu2022dataset,yang2022deep,yang2022factorizing,yu2023dataset,li2023lrrnet}.
In particular, Graph Neural Networks (GNNs) have emerged as the mainstream deep learning architectures to analyze irregular samples where information is present in the form of graph structure~\cite{kipf2016semi,velivckovic2017graph,xu2019powerful,jing2021amalgamating,dwivedi2022graph,jing2023deep,jing2023segment}. 
As a powerful class of graph-relevant networks, these architectures have shown encouraging performance in various domains such as cell clustering~\cite{li2022cell,alghamdi2021graph}, chemical prediction~\cite{tavakoli2022quantum,zhong2022root}, social networks~\cite{bouritsas2022improving, dwivedi2022graph}, image style transfer~\cite{jing2022learning,jing2021meta}, traffic networks~\cite{bui2021spatial,li2021spatial}, combinatorial optimization~\cite{schuetz2022combinatorial,cappart2021combinatorial,jwang_paper2}, and power grids~\cite{yang2022event,boyaci2021joint,chen2022distribution}.
These successful applications can be actually classified into various downstream tasks, e.g., node, link~\cite{gupta2021graph,chen2022bag}, and graph predictions~\cite{chen2022distributiontkde,han2022g}.

\begin{figure*}
  \vspace{0.2cm}  
  \setlength{\abovecaptionskip}{0.2cm} 
  \setlength{\belowcaptionskip}{0.2cm} 
  \centering
  \hspace{5mm}
  \subfigure[Two $k$-regular graph\label{k-regular}]{\includegraphics[width=7.6cm]{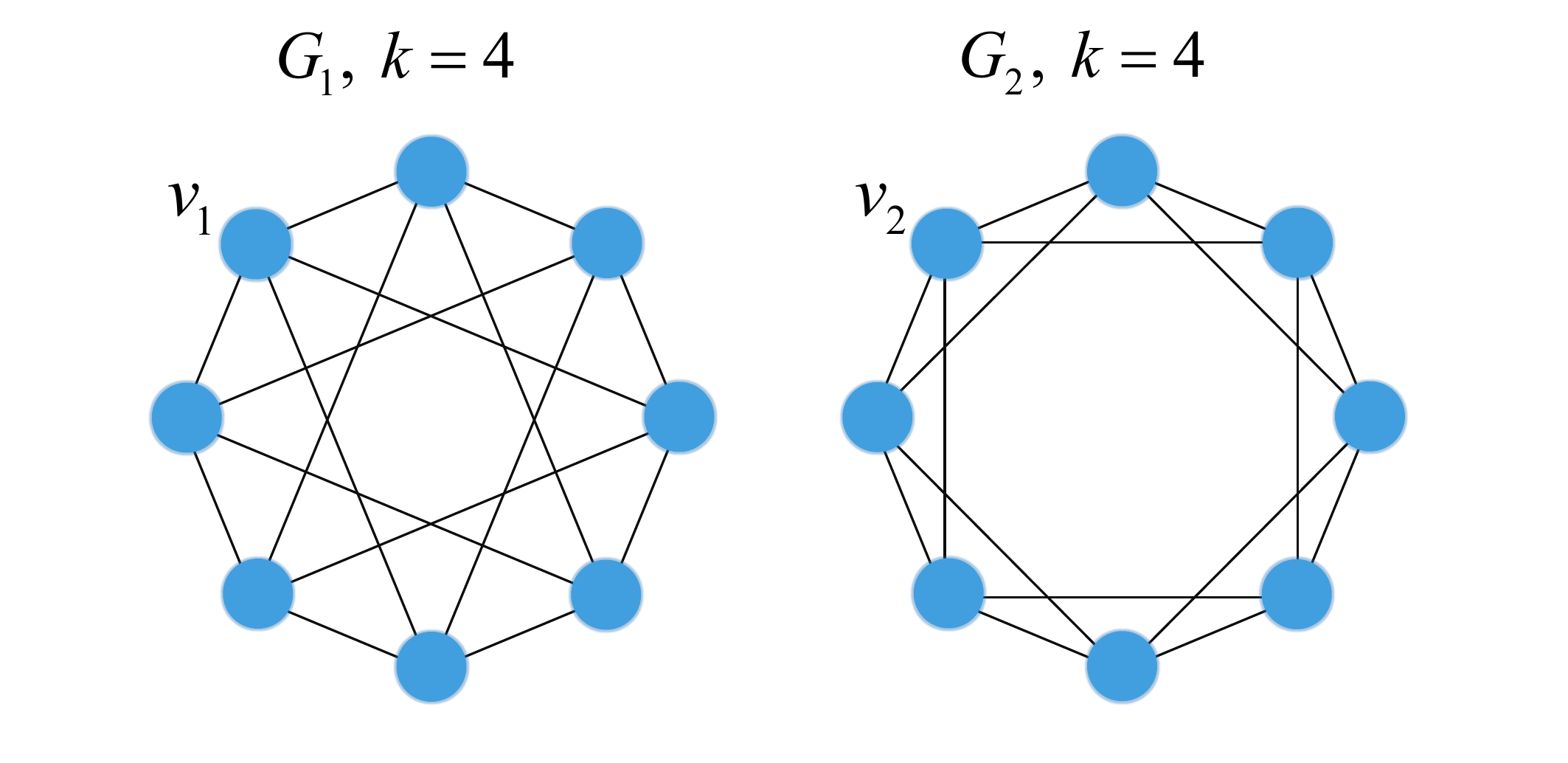}
  }
  \hspace{5mm}
  \subfigure[Over-smoothing phenomenon\label{oversmoothing}]{\includegraphics[width=8.0cm]{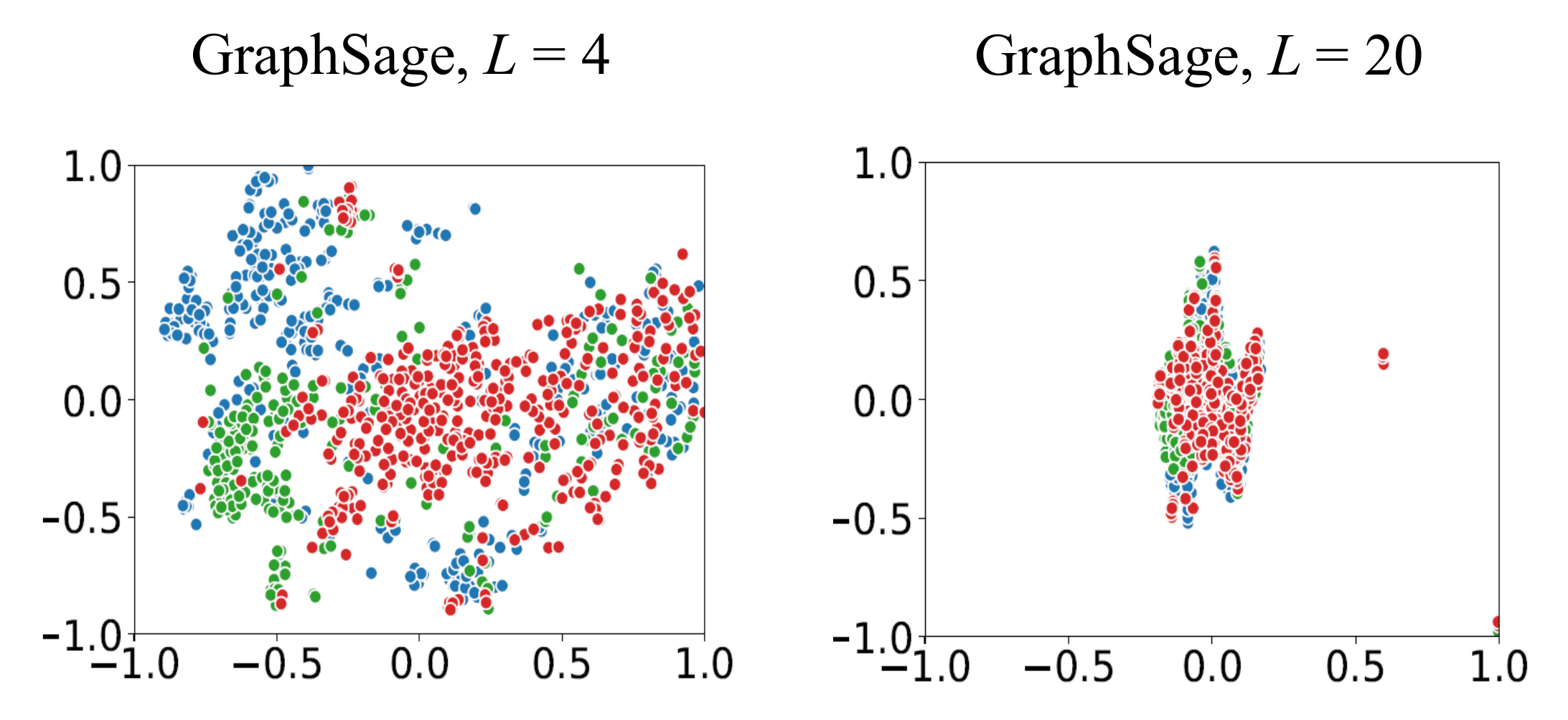}}
  \hspace{0mm}

  \caption{(a)~The illustration of two different $k$-regular graphs, where $k=4$ in $G_1$ and $G_2$. (b) The t-SNE illustration of over-smoothing issue on~{Cora} dataset~(show the first three categories).}
  \label{fig:subtreeiso-oversmoothing}
  \vspace{-0.4cm}  
\end{figure*}

Most existing GNNs employ the message-passing scheme to learn node features from local neighborhoods, ignoring the unique characteristics of node-induced subgraphs, consequently limiting the expressive capability of GNNs.
Specifically, the expressive capability of such GNNs is at most as powerful as that of the Weisfeiler-Leman (1-{WL}) \cite{weisfeiler1968reduction} test to distinguish non-isomorphism graphs~\cite{xu2019powerful}. 
An example is shown in Figure~\ref{k-regular}, where it is still challenging for existing GNNs to separate the two $k$-regular graphs $G_1$ and $G_2$. 
Furthermore, the advances in recent researches~\cite{LiHW18,ZhaoA20} show that the formulation of~GNNs is a particular format of Laplacian smoothing, which thereby results in the over-smoothing issue, especially when~GNNs go deeper, as shown in Figure~\ref{oversmoothing}. As a result of increasing layer numbers, GNNs are typically prone to indistinguishable representations for node and link predictions~\cite{gupta2021graph,chen2022bag}.

In this paper, we develop a new normalization framework which is SuperNorm, compensating for the ignored characteristics among the node-induced subgraph structures, to improve the graph expressivity over the prevalent message-passing GNNs for various graph downstream tasks. 
Our proposed normalization, with subgraph-specific factor embedding, enhances the GNNs' expressivity to be at least as powerful as 1-WL test in distinguishing non-isomorphic graphs. Moreover, SuperNorm can help alleviate the oversmoothing issue in deeper GNNs. 
The contributions of this work can be summarized as follows:

\begin{itemize}[leftmargin=*]
    \item We propose a method to compute subgraph-specific factors by performing a hash function on the number of nodes and edges, as well as the eigenvalues of the adjacency matrix, which is proved to be exclusive for each non-isomorphic subgraph.

    \item We develop a novel normalization scheme~(i.e., SuperNorm), with subgraph-specific factor embedding, to strengthen the expressivity of GNNs, which can be easily generalized in arbitrary GNN architectures for various downstream tasks.

	\item We provide both the experimental and the theoretical analysis to support the claim that SuperNorm can extend GNNs to be at least as powerful as the 1-WL test in distinguishing non-isomorphism graphs. In addition, we also prove that SuperNorm can help alleviate over-smoothing issue.
\end{itemize}

\section{Preliminary}

In this section, we begin by introducing the notation for GNNs, along the way, understanding the issue of the graph isomorphism test and the over-smoothing phenomenon in GNNs. \\
\textbf{Notation.} 
Let {$G=(V_G,E_G)$} denotes a undirected graph with $n$ vertices and $m$ edges, where {$V_G$}$=\{v_1,v_2,..v_n\}$ is an unordered set of vertices and {$E_G \subseteq V_G\times V_G $} is a set of edges. 
{$\mathcal{N}(v)=\{u\in V_G|(v,u)\in E_G\}$} denotes the neighbor set of vertex $v$, and its neighborhood subgraph $S_v$ is induced by {$\tilde{\mathcal{N}}(v)=\mathcal{N}(v)\cup {v}$}, which contains all edges in {$E_G$} that have both endpoints in {$\tilde{\mathcal{N}}(v)$}. 
As shown in Figure~\ref{fig:sugraph}, {$S_{v_{1}}$} and {$S_{v_{2}}$} are the induced subgraphs of $v_1$ and $v_2$ in Figure~\ref{k-regular}. 
The feature matrix $\text{H}=[h_1,h_2,...,h_n]\in \mathds{R}^{n\times d}$ is the learned feature from~GNNs, where $d$ is the embedded dimension.

\subsection{Graph Isomorphism Issue}

The graph isomorphism issue is a challenging problem of determining whether two finite graphs are topologically identical. 
The WL algorithm~\cite{weisfeiler1968reduction}, i.e., 1-WL for notation simplicity, is a well-established framework and computationally-efficient that distinguishes a broad class of graphs. 
In details, 1-WL refines color by iteratively aggregating the labels of nodes and hashing the aggregated multiset into unique labels, and its formulation can be represented as:
\begin{equation}
\label{wltest}
\begin{split}
h_v^{(t)}=\texttt{Hash}(h_v^{t-1},\mathcal{A}\{h_u^{t-1}|u \in \mathcal{N}(v) \}),\\
\end{split}
\end{equation}
where $h_v^{(t)}$ is the $t$-th layer representation of node $v$. $\mathcal{A}$ is the aggregating function to aggregate its neighbors' labels, and~\texttt{Hash} is the injective mapping function to get a unique new label. However, \textbf{\texttt{1}-\texttt{WL} can not distinguish subtree-isomorphic subgraphs},~\emph{e.g.}, $S_{v_1}, S_{v_2}$ in Figure~\ref{fig:sugraph}, which are defined as follow:

\begin{definition}
Following the definition in~\cite{wijesinghe2022new}, $S_{v_i}$ and $S_{v_j}$ are~{subtree-isomorphic}, $S_{v_i} {\simeq}_{\texttt{subtree}} S_{v_j}$, if there exists a bijective mapping $f:$ {\small $\tilde{\mathcal{N}}$}$(v_i)\rightarrow$ {\small $\tilde{\mathcal{N}}$}$(v_j)$ such that $f(v_i)=v_j$ and for any $v'\in$ {\small $\tilde{\mathcal{N}}$}$(v_i)$ and $f(v')=u',h_{v'}=h_{u'}$.
\end{definition}
\noindent
\textbf{Expressivities of GNNs.}~{GNNs}' expressivities are as powerful as the {1}-{WL} test in distinguishing non-isomorphic graphs while any two different subgraphs  $S_{v_i}$, $S_{v_j}$ are subtree-isomorphic~(i.e., $S_{v_i}\simeq_{\texttt{subtree}}S_{v_j}$), or~{GNNs} can map two different subgraphs into two different embeddings if and only if $S_{v_i} \not \simeq_{\texttt{subtree}}S_{v_j}$. 

\begin{theorem}
\label{theorem_gnns_power}
{GNNs} are as powerful the as {1}-{WL} test in distinguishing non-isomorphic graphs while any two different subgraphs  $S_{v_i}$, $S_{v_j}$ are subtree-isomorphic~(i.e., $S_{v_i}\simeq_{\texttt{subtree}}S_{v_j}$), or~{GNNs} can map two different subgraphs into two different embeddings if and only if $S_{v_i} \not \simeq_{\texttt{subtree}}S_{v_j}$.
\end{theorem}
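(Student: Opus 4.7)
The plan is to prove the stated if-and-only-if characterization first, and then recover the ``as powerful as 1-WL'' clause as an immediate consequence. The driving observation is that a message-passing GNN update and the 1-WL color refinement in Eq.~(\ref{wltest}) share the same template: take a multiset of neighbor labels, combine it with the central label, and apply a (learned or hashed) injective-in-intent map. Because of this structural parallel, the question ``can a GNN separate $S_{v_i}$ from $S_{v_j}$?'' reduces, layer by layer, to ``do their neighborhood multisets differ?''.

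For the forward direction, I would prove by induction on the layer index $t$ that $S_{v_i}\simeq_{\texttt{subtree}}S_{v_j}$ forces $h_{v_i}^{(t)}=h_{v_j}^{(t)}$ for any message-passing GNN. The base case uses the bijection $f:\tilde{\mathcal{N}}(v_i)\to\tilde{\mathcal{N}}(v_j)$ from the definition to equate the initial features of matched vertices, giving $h_{v_i}^{(0)}=h_{v_j}^{(0)}$. In the inductive step the same bijection pairs neighbors whose layer-$(t{-}1)$ embeddings coincide, so the two neighbor multisets at layer $t{-}1$ are equal; any permutation-invariant aggregator followed by a deterministic combine therefore outputs the same $h_{v_i}^{(t)}$ and $h_{v_j}^{(t)}$. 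This recovers exactly the known ``1-WL failure mode'' of GNNs.

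For the reverse direction, I would invoke the GIN-style argument: if $S_{v_i}\not\simeq_{\texttt{subtree}}S_{v_j}$, then no feature-preserving bijection between their neighborhoods exists, so the neighbor multisets $\{h_u:u\in\mathcal{N}(v_i)\}$ and $\{h_u:u\in\mathcal{N}(v_j)\}$ differ. Choosing an injective multiset aggregator (realizable, as shown by Xu et al., via sum aggregation composed with an MLP on countable feature spaces) together with an injective combine map produces $h_{v_i}^{(1)}\neq h_{v_j}^{(1)}$. Taken together with the forward direction, this establishes the iff clause; the ``as powerful as 1-WL'' clause then follows because 1-WL collapses exactly the same subtree-isomorphic pairs that GNNs collapse, and the injectivity choice above matches the upper bound.

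The main obstacle I expect is a definitional one: subtree-isomorphism as stated is a one-layer condition comparing raw features, whereas GNN expressivity is a statement about $t$-step unfolded computation trees. To make the induction rigorous I would either (i) interpret the definition recursively, requiring $f$ to additionally witness subtree-isomorphism of each paired neighbor at depth $t{-}1$, or (ii) apply the single-layer definition to the refined (post-$(t{-}1)$-round) labels rather than the initial features. Picking the interpretation that is consistent with how the definition is reused elsewhere in the paper is the delicate ingredient; once fixed, the inductive argument and the GIN-style separability argument slot in with only routine calculations.
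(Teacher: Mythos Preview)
Your proposal is correct and shares the paper's core idea: compare the message-passing update $\mathcal{M}(h_v^{t-1},\mathcal{A}\{\cdot\})$ to the 1-WL update $\texttt{Hash}(h_v^{t-1},\mathcal{A}\{\cdot\})$ and observe that the two coincide precisely when the combine/update map is injective, so the separation question collapses to whether the neighbor multisets (hence subtree-isomorphism) differ. The paper's actual proof, however, is considerably less formal than what you sketch: it simply writes down the two formulas side by side, remarks that the only difference is $\mathcal{M}$ versus $\texttt{Hash}$, and then lists three cases ($S_{v_i}\simeq_{\texttt{subtree}}S_{v_j}$; not subtree-isomorphic with $\mathcal{M}$ separating; not subtree-isomorphic with $\mathcal{M}$ collapsing) without any induction on depth and without explicitly invoking the GIN injectivity construction. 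Your layer-by-layer induction and your explicit appeal to the sum-MLP realizability result are thus more rigorous additions rather than a different route. The definitional obstacle you flag---that subtree-isomorphism as written is a one-hop feature condition whereas the claim concerns $t$-step computation trees---is real and is exactly the kind of detail the paper's case listing glosses over; your proposed fix (apply the definition recursively to refined labels) is the natural and correct resolution.
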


\begin{proof}
The complete proof is provided in Appendix~\ref{theorem_gnns_power_app}.
\end{proof}

\begin{figure}
	\vspace{0.0cm}  
	\setlength{\abovecaptionskip}{0.2cm} 
	\setlength{\belowcaptionskip}{0.2cm} 
	\centering
	\hspace{-0mm}
	{\includegraphics[scale=0.2]{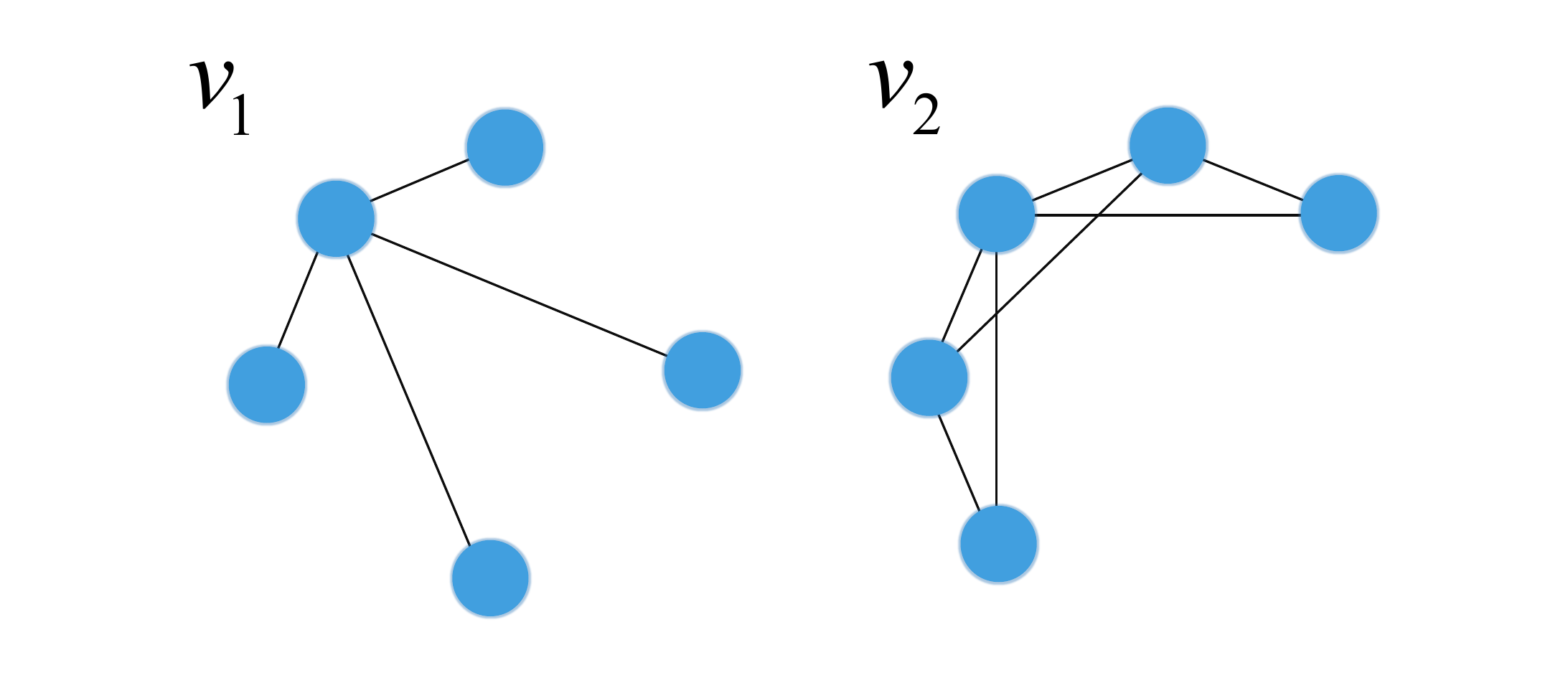}}
	\hspace{-0mm}
	\caption{The illustration of subtree-isomorphic, i.e., two induced subgraphs of $v_1$ and $v_2$ from $G_1$ and $G_2$ in Figure~\ref{k-regular}.}
	\label{fig:sugraph}
	\vspace{0.0cm}  
\end{figure}

\subsection{Over-smoothing Issue}

Although the message-passing mechanism helps us to harness the information encapsulated in the graph structure, it may introduce some limitations when combined with GNN's depth. 
For the convolution operation of GNNs,  a graph structure $G$ can be equivalently defined as an adjacency matrix $\text{A} \in \mathds{R}^{n\times n}$ to update node features with the degree matrix ${\text{D}}=\text{diag}(\text{deg}_1,\text{deg}_2,...,\text{deg}_n)\in \mathds{R}^{n\times n}$. 
Let $\tilde{\text{A}} = \text{A} + \text{I}$ and $\tilde{\text{D}} = \text{D} + \text{I}$ denote the augmented adjacency and degree matrices by adding self-loops. {$\tilde{\text{A}}_{\texttt{sym}}=\tilde{\text{D}}^{-1/2}\tilde{\text{A}}\tilde{\text{D}}^{-1/2}$} and {$\tilde{\text{A}}_{\texttt{rw}}=\tilde{\text{D}}^{-1}\tilde{\text{A}}$} 
represent symmetrically and nonsymmetrically normalized adjacency matrices, respectively. Given the input node feature matrix $\text{H}^{t-1}$, the basic formulation of GNNs convolution follows:
\begin{equation}
\begin{split}
\text{H}^{t}=(\text{I}-\alpha\text{I})\text{H}^{t-1}+\alpha\tilde{\text{\text{A}}}_{\texttt{rw}}\text{H}^{t-1},
\end{split}
\end{equation}
where graph convolution usually lets $\alpha=1$ and uses the symmetrically normalized Laplacian to obtain $\text{H}^{t}= \tilde{\text{A}}_{\texttt{sym}}\text{H}^{t-1}$. To analyze~{GNNs}' working mechanism, Li et al.~\cite{LiHW18} showed that the graph convolution is a particular form of Laplacian smoothing, which makes the node representation from different clusters become mixed up. Furthermore, Zhao et al.~\cite{ZhaoA20} discovered  that Laplacian smoothing washes away the information from all the features,~{i.e.} the representations with regard to different columns of the feature matrix, and thus makes nodes indistinguishable.\\
\textbf{Representatives of GNNs.} As a natural result of~{GNNs} going deeper, node representations become similar, and signals are washed away. Therefore, sample representations from different clusters become indistinguishable so that~{GNNs}' performance suffers from this phenomenon for node-relevant tasks,~\emph{e.g.}, node and link classification~\cite{gupta2021graph,chen2022bag}.

\section{Subgraph-specific Factor Embedded Normalization}

We first present the subgraph-specific factor that uniquely distinguish various non-isomorphism subgraphs. 
Then, we present a graph normalization framework, i.e., SuperNorm, which improves GNNs' expressivity by embedding 
the subgraph-specific factor. Finally, we provide the theoretical analysis to support the claim that SuperNorm can generally extend GNNs expressivity for various tasks.

\subsection{Subgraph-specific Factor}
We propose the subgraph-specific factor, a computable quantity for distinguishing non-isomorphism subgraphs, by considering the following
two cases:
\begin{itemize}[leftmargin=*]
    \item \textbf{Subgraphs are not subtree-isomorphic}, i.e., these subgraphs have different numbers of nodes, and can be distinguished by the node's number. In this condition, the subgraph-specific factor is related to node's number and is representes as $\varphi(|V_{S_{v_i}}|)$, where $|\cdot|$ denotes cardinality and $\varphi(\cdot)$ is an arbitrary injective function.

    \item \textbf{Subgraphs are subtree-isomorphic}, i.e., these subgraphs have the same numbers of nodes, {but} can be distinguished by the set of eigenvalues ${\rm Eig}=\{e_0,e_1,...,e_{n-1}\}$ of the adjacency matrix~\footnote{{In linear algebra, matrices with the same set of eigenvalues are similar. Similar adjacency matrix correspond to equivalent graph topologies, if the node orders in a subgraph are ignored.} Given a set of adjacency matrices, if they all have the same set of eigenvalues, then they are similar (i.e., their topologies are same)~\cite{poole2014linear}. On the contrary,  their topologies are different.}. In this condition, the subgraph-specific factor is the result of injective function over eigenvalues, i.e., $\psi({\rm Eig}_{S_{v_i}})$.
\end{itemize}

\noindent
By comprehensively considering these two cases, we can uniquely determine a subgraph structure, and thus develop the final subgraph-specific factor $\xi(S_{v_i})$ by utilizing hash function:

\begin{equation}
	\label{subgraph-specific-factor}
	\begin{split}
	\xi(S_{v_i}) = \texttt{Hash}(\varphi(|V_{S_{v_i}}|), \psi({\rm Eig}_{S_{v_i}})). \\
	\end{split}
\end{equation}
where the adopt \texttt{Hash} function in this paper is the polynomial rolling hash function (\texttt{PloyHash})~\cite{karp1987efficient}, which is often designed as an injection function with a low probability of hash collisions~\cite{gusfield1997algorithms,leiserson1994introduction}, and is widely used in applications such as string matching and fingerprinting. Therefore, the \texttt{PloyHash} can be used as an injective
mapping function over the multiset in practice, which is suitable to compute distinguishable factors for identifying subgraphs. The definition is as follows.


\begin{definition}
	Polynomial rolling hash function~(PloyHash) is a hash function that uses only multiplications and additions. Given a multiset $S=\{s_0,s_1,...,s_{n-1}\}$, PloyHash is:
	\begin{equation}
	\begin{split}
		{\rm\texttt{PloyHash}}(S) = s_0+s_1\cdot p+s_2\cdot p^2,...,+ s_{n-1} \cdot p^{n-1}\;,
	\end{split}
	\end{equation}
where $p$ is a constant and $n$ is the number of input elements.
\end{definition}

\noindent
\textbf{Implementation.} We also use \texttt{PloyHash} as the injective function $\psi({\rm Eig}_{S_{v_i}})$ on eigenvalues. Moreover, we additionally use a powerful weight~\cite{wijesinghe2022new}, i.e., $\mathcal{D}(S_{v_i})=2|E_{S_{v_i}}|/(|V_{S_{v_i}}| \cdot (|V_{S_{v_i}}|-1)) \cdot {|V_{S_{v_i}}|}^2 $, to increase the capability of factors. Therefore, the Eq.(\ref{subgraph-specific-factor}) can be rewritten as $\xi(S_{v_i}) = \texttt{PloyHash}(\mathcal{D}(S_{v_i}),\varphi(|V_{S_{v_i}}|),$ $\psi({\rm Eig}_{S_{v_i}}))$.

\subsection{SuperNorm for GNNs}

Standard normalization can be empirically divided into two stages, \emph{i.e.}, centering $\&$ scaling~({CS}) and affine transformation~({AT}) operations. For the input features $\text{H}\in \mathds{R}^{n\times d}$, 
the~{CS} and~{AT} follow: 
\begin{equation}
\begin{split}
\texttt{CS} \; \text{:} \;& \text{H}_{\texttt{CS}}=\frac{\text{H}-\mathds{E}(\text{H})}{\sqrt{\mathds{D}\text{(H)}+\epsilon}},\\
\texttt{AT} \; \text{:} \;& \text{H}_{\texttt{AT}}=\text{H}_{\texttt{CS}}\odot\gamma + \beta,\\
\end{split}
\end{equation}
where $\odot$ is the dot product with the broadcast mechanism.
{$\mathds{E}(\text{H})$} and {$\mathds{D}(\text{H})$} denote mean and variance statistics, and $\gamma$, $\beta$ $ \in \mathds{R}^{1\times d}$ are the learned scale and bias factors. 
The main drawback of the existent normalizations is the absence of subgraph information, which downgrades the expressivity to distinguish, \emph{e.g.}, isomorphic graphs and other graph symmetries. 
In this paper, we aim to embed factor $\xi(S_{v_i})$ into BatchNorm, and thus take a batch of graphs for example.

\noindent
\textbf{Batch Graphs.} For a bacth of graphs {$\mathcal{G}=\{G_1,...,G_m\}$} with node set {$V_\mathcal{G}=V_{G_1}$$\cup$ {$V_{G_2}$}{$,...,V_{G_m}$}} $=\{v_1,v_2,...,v_n\}$ and feature matrix $\text{H}\in \mathds{R}^{n\times d}$. 
Subgraph-specific factor of this batch nodes is represented as 
$\text{M}_{\mathcal{G}}${$=\xi(S_{V_\mathcal{G}})=[\xi(S_{V_{G_1}});\xi(S_{V_{G_2}})$$;...;\xi(S_{V_{G_m}})]=[\xi(S_{v_1}),\xi(S_{v_2}),$ ..., $\xi(S_{v_n})]$}$\in \mathds{R}^{n\times 1}$.
The segment summation-normalization $\text{M}_\texttt{SN}${$=[\mathcal{F}(\xi(S_{V_{G_1}}));...;\mathcal{F}(\xi(S_{V_{G_m}}))$}$]$$\in \mathds{R}^{n\times 1}$
where {$\mathcal{F}${$(\xi(S_{V_{G_i}}))=\xi(S_{V_{G_i}})$ $/\sum\xi(S_{V_{G_i}})\in \mathds{R}^{|V_{G_i}|\times 1}$}},
denotes the summation-normalization operation in each graph. \\

\noindent
To establish the \textbf{SU}bgraph-s\textbf{PE}cific facto\textbf{R} embedded normalization~(SuperNorm), we develop two strategies, i.e., representation calibration~({RC}) and representation enhancement~({RE}) as follow:

\subsubsection{Representation Calibration~(RC)} {Before the CS stage, we calibrate the inputs by injecting the subgraph-specific factor as well as incorporate the graph instance-specific statistics into representations, which balances the distribution differences along with embedding structural information.}
For the input feature $\text{H}\in \mathds{R}^{n\times d}$, the~{RC} is formulated as:
\begin{equation}
\begin{split}
\label{RC}
\texttt{RC} \; \text{:} \; \text{H}_{\texttt{RC}}=\text{H}+w_\texttt{RC} \odot \text{H}_{\texttt{SA}} \cdot (\text{M}_\texttt{RC} \mathds{1}_d^T),
\end{split}
\end{equation}
where $\cdot$ denotes the dot product operation and $\mathds 1_d$ is the $d$-dimensional all-one cloumn vectors. $w_\texttt{RC}\in \mathds{R}^{1\times d}$ is a learned weight parameter. $\text{M}_\texttt{RC} = \text{M}_\texttt{SN} \cdot \text{M}_\mathcal{G} \in \mathds{R}^{n\times 1}$ is the calibration factor for~{RC}, which is explained in details in Appendix~\ref{RC_factor_app}.
$\text{H}_{\texttt{SA}}\in \mathds{R}^{n\times d}$ is the segment averaging of H, obtained by sharing the average node features of each graph with its nodes, where each individual graph is called a segment in the DGL implementation~\cite{wang2019dgl}.

\subsubsection{Representation Enhancement~(RE)} 
Right after the CS operation, node features $\text{H}_\texttt{CS}$ are constrained into a fixed variance range and distinctive information is slightly weakened. Thus, we design the RE operation to embed subgraph-specific factor into AT stage for the enhancement the final representations. The formulation of~{RE} is written as follows:
\begin{equation}
\begin{split}
\texttt{RE} \; \text{:} \;& \text{H}_{\texttt{RE}}=\text{H}_{\texttt{CS}} \cdot \texttt{Pow}{(\text{M}_{\texttt{RE}},{w_{\texttt{RE}})}},\\
\end{split}
\end{equation}
where $w_{\texttt{RE}} \in \mathds{R}^{1\times d} $ is a learned weight parameter, and $\texttt{Pow}(\cdot)$ is the exponential function. 
To imitate affine weights in~{AT} for each channel, we perform the segment summation-normlization on calibration factor $\text{M}_{\texttt{RC}}$ and repeats $d$ columns to obtain enhancement factor $\text{M}_{\texttt{RE}}$$\in\mathds{R}^{n\times d}$ , which ensures column signatures of $\texttt{Pow}{(\text{M}_{\texttt{RE}},{w_{\texttt{RE}})}}-1$ are consistent.

\subsubsection{The Implementation of SuperNorm.}
At the implementation stage, we merge~{RE} operation into~{AT} for a simpler formultation description. Given the input feature $\text{H}\in \mathds{R}^{n\times d}$, the formulation of the~{SuperNorm} is written as:
\begin{equation}
\label{motifnorm}
\begin{split}
\texttt{RC} \; \text{:} \;& \text{H}_{\texttt{RC}}=\text{H}+w_\texttt{RC} \odot \text{H}_{\texttt{SA}} \cdot (\text{M}_\texttt{RC} \mathds{1}_d^T),\\
\texttt{CS} \; \text{:} \;& \text{H}_{\texttt{CS}}=\frac{\text{H}_\texttt{RC}-\mathds{E}(\text{H}_\texttt{RC})}{\sqrt{\mathds{D}(\text{H}_\texttt{RC})+\epsilon}},\\
\texttt{AT} \; \text{:} \;& \text{H}_{\texttt{AT}}=\text{H}_{\texttt{CS}}\cdot (\gamma + \mathds{P})/2 + \beta,\\
\end{split}
\end{equation}
where $\mathds{P}=\texttt{Pow}{(\text{M}_{\texttt{RE}},{w_{\texttt{RE}})}}$. $\text{H}_{\texttt{AT}}$ is the output of SuperNorm. To this end,  we add~{RC} and~{RE} operations at the beginning and ending of the original~{BatchNorm} layer to strengthen the expressivity power after {GNNs}' convolution. \\
\textbf{Please note that the subgraph-specific factors are preprocessed in the dataset, and the RC and RE operations are the dot product in $\mathds{R}^{n\times d}$. Therefore, the additional time complexity is $\mathcal{O}(nd)$.}

\subsection{Theoretical Analysis.} SuperNorm with subgraph-specific factor embedding, compensating structural characteristics of subgraphs, can generally improve GNNs' expressivity as follows: 

(1) Graph-level: For graph prediction tasks, SuperNorm compensates for the subgraph information to distinguish the subtree-isomorphic case that 1-WL can not recognize. Specifically, an arbitrary GNN equipped with SuperNorm is at least as powerful as the 1-WL test in distinguishing non-isomorphic graphs, The detalied analysis is provided in the Theorem~\ref{theorem_motifnorm_power}.

(2) Node-level: The ignored subgraph information strengthens the node representations, which is beneficial to the downstream recognition tasks. Furthermore, SuperNorm with subgraph-specific factor injected can help alleviate the oversmoothing issue, which is analyzed in the following Theorem~\ref{theorem_oversmoothing}.

(3) Training stability: The RC operation is beneficial to stablilize model training, which makes normalization operation less reliant on the running means and balances the distribution differences among batches, and is analyzed in the following Proposition~\ref{proposition_stable}.

\begin{theorem}
	\label{theorem_motifnorm_power}
	SuperNorm extends GNNs' expressivity to be at least as powerful as 1-WL test in distinguishing non-isomorphic graphs.
\end{theorem}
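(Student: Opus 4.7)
The plan is to show that SuperNorm equips every node with a feature-level fingerprint of its $1$-hop induced subgraph, and that, once such fingerprints are present, an arbitrary message-passing GNN can simulate the $1$-WL color refinement. The argument will proceed in three stages.

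First, I would establish that the subgraph-specific factor $\xi(S_v)$ is injective on isomorphism classes of induced subgraphs. By the footnote attached to Eq.(\ref{subgraph-specific-factor}), two adjacency matrices with identical eigenvalue multisets are similar and thus encode the same graph topology up to relabeling; consequently any two non-isomorphic induced subgraphs must differ in at least one of (i) graph density $\mathcal{D}$, (ii) vertex count $|V|$, or (iii) the eigenvalue multiset. Since \texttt{PloyHash} is treated as an injective map on tuples of distinct multisets, the implemented $\xi$ distinguishes non-isomorphic subgraphs. In particular, for a pair of subtree-isomorphic but non-isomorphic subgraphs $S_{v_i} \simeq_{\texttt{subtree}} S_{v_j}$---which by Theorem~\ref{theorem_gnns_power} form precisely the obstruction preventing a generic GNN from matching $1$-WL---we now obtain $\xi(S_{v_i}) \neq \xi(S_{v_j})$.

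Second, I would trace how the RC and RE operations transport this structural information into the hidden representations. The RC update in Eq.(\ref{RC}) adds a term proportional to $\mathrm{M}_{\texttt{RC}}$, a per-node signature derived from $\xi(S_v)$; for a generic choice of the learnable weight $w_{\texttt{RC}}$, nodes with distinct $\xi$ values receive distinct additive perturbations. Analogously, $\texttt{Pow}(\mathrm{M}_{\texttt{RE}}, w_{\texttt{RE}})$ in the RE stage is injective in its base for generic $w_{\texttt{RE}}$. Composing across the CS step, the full SuperNorm map $h_v \mapsto \texttt{SuperNorm}(h_v, \xi(S_v))$ is injective in its second argument outside a measure-zero set of parameter choices, so distinct subgraphs yield distinguishable post-normalization features even when the raw GNN outputs coincide.

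Third, I would bootstrap to $1$-WL expressivity. One SuperNorm-equipped GNN layer already injects $\xi(S_v)$ into each node feature; composing $t$ layers propagates the collection $\{\xi(S_u) : u \text{ within } t \text{ hops of } v\}$ into $v$'s representation through message passing. For any pair of non-isomorphic graphs that $1$-WL separates at some iteration, the corresponding per-node multisets of $(h,\xi)$ pairs must differ between the two graphs, and an injective aggregation---which any sufficiently expressive GNN can realize in the sense of \cite{xu2019powerful}---then separates the graph-level readouts. The main obstacle, and where I expect most of the technical work to lie, is showing that the learnable parameters in RC, RE, and AT can be simultaneously chosen so that no pair of $1$-WL-distinguishable multisets is accidentally collapsed; I plan to handle this by exhibiting one explicit parameter configuration (e.g., a coordinate-wise nonzero $w_{\texttt{RC}}$ together with a weight $w_{\texttt{RE}}$ chosen away from integer values that would force $\texttt{Pow}$ to be many-to-one) and then invoking a density argument to conclude that such configurations are attainable in training.
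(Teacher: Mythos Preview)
Your route differs substantially from the paper's. The paper gives a short three-case argument that piggybacks directly on Theorem~\ref{theorem_gnns_power}: it partitions pairs $(S_{v_i},S_{v_j})$ according to whether they are subtree-isomorphic and, in each case, asserts that the subgraph-specific factor recovers or exceeds 1-WL power. Case (i), $S_{v_i}\simeq_{\texttt{subtree}}S_{v_j}$, is where both the bare GNN and 1-WL stall, and the paper simply observes that $\xi$ separates such pairs, pushing the equipped GNN strictly past 1-WL there; cases (ii)--(iii) handle $S_{v_i}\not\simeq_{\texttt{subtree}}S_{v_j}$ depending on whether the update map $\mathcal{M}$ already separates the two multisets, again appealing to $\xi$ when $\mathcal{M}$ collapses. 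There is no multi-layer bootstrap, no explicit parameter construction, and no density argument.

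Your Stages~1--2 are more careful than anything the paper does in tracking how RC and RE actually carry $\xi$ into the hidden state, and that part is fine. But Stage~3 overreaches relative to the theorem as stated: you invoke ``an injective aggregation---which any sufficiently expressive GNN can realize,'' whereas the claim (see also the abstract) is about an \emph{arbitrary} GNN. The paper sidesteps this by never attempting to certify injectivity of the composite layer map; it argues at the single-step level that $\xi$ supplies whatever distinguishing signal $\mathcal{M}$ may have destroyed, and leaves it there. If you want your proof to land on the same statement, you can drop the multi-hop propagation and the parameter-density argument and recast the proof as the same three-way case split, using your Stage~1 (injectivity of $\xi$ on isomorphism classes, granted by the paper's spectral assumption) together with a lightweight version of Stage~2 to discharge each case. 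Your more elaborate route would buy a genuinely rigorous statement, but only for GNNs whose aggregation is already injective---a narrower class than the theorem targets.
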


\begin{proof}

For the non-isomorphic graphs distinguishing issue, we follow the analysis procedure in the proof of Theorem~\ref{theorem_gnns_power},~\emph{i.e.}, taking subtree-isomorphic substructures, for example.
	
\begin{itemize}[leftmargin=*]
\item When $S_{v_i}\simeq_{\texttt{subtree}}S_{v_j}$, GNNs' expressivity is as powerful as 1-WL test in distinguishing non-isomorphic graphs. However, GNNs with the subgraph-specific factor embedding can distinguish subtree-isomorphic substructure, where 1-WL misses the distinguishable power. In this case, GNNs are more expressive than 1-WL test. 

\item When $S_{v_i}\not\simeq_{\texttt{subtree}}S_{v_j}$ and $\mathcal{M}$ map two different substructures into different representations~({i.e.}, $f(S_{v_i})\neq f(S_{v_j})$), which means that $\mathcal{M}$ distinguish these two substructures like~\texttt{Hash} in 1-WL. In this case, SuperNorm further improve GNNs' expressivity in distinguishing graphs. 


\item When $S_{v_i}\not\simeq_{\texttt{subtree}}S_{v_j}$ and $\mathcal{M}$ transforms two different substructures into the same representation, ~{GNNs} are~\textbf{not} as powerful as~{1}-{WL} test in distinguishing non-isomorphic graphs. However, SuperNorm can improve the graph expressivity of~{GNNs} to be as powerful as~{1}-{WL} test.

\end{itemize}
According to the above three items, we can conclude that SuperNorm can extend GNNs' expressivity to be at least as powerful as~{1}-{WL} test in distinguishing non-isomorphic graphs

The proof of Theorem~\ref{theorem_motifnorm_power} is complete.
\end{proof}



\begin{theorem}
	\label{theorem_oversmoothing}
	{SuperNorm helps alleviate the oversmoothing issue.}
\end{theorem}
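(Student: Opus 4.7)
The plan is to show that SuperNorm counteracts the feature-collapse mechanism underlying oversmoothing by continually injecting node-specific, subgraph-dependent variation at every normalization layer. First I would recall the characterization reviewed in the preliminaries: iterated graph convolution $\text{H}^t = \tilde{\text{A}}_{\texttt{sym}} \text{H}^{t-1}$ is a form of Laplacian smoothing, so as $t$ grows the rows of $\text{H}^t$ contract toward the subspace spanned by the principal eigenvector of $\tilde{\text{A}}_{\texttt{sym}}$. A convenient quantitative proxy is a Dirichlet-type energy $\mathcal{E}(\text{H}) = \sum_{(i,j)\in E_G}\|h_i - h_j\|^2$, which tends to $0$ under pure smoothing and which I would adopt as the formal measure of node distinguishability.

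Next I would track how RC modifies this energy. Given an input $\text{H}$ produced by a message-passing layer, RC computes $\text{H}_{\texttt{RC}} = \text{H} + w_{\texttt{RC}} \odot \text{H}_{\texttt{SA}} \cdot (\text{M}_{\texttt{RC}} \mathbbm{1}_d^T)$. The key observation is that $\text{H}_{\texttt{SA}}$ is constant within each graph segment, whereas $\text{M}_{\texttt{RC}}$ varies node-by-node and, by the injectivity property of the subgraph-specific factor $\xi$, takes distinct values on nodes with non-isomorphic induced subgraphs. Hence even if $\text{H}$ has nearly collapsed so that its rows are almost equal, the added term contributes a node-dependent, rank-one perturbation, and one can lower bound $\mathcal{E}(\text{H}_{\texttt{RC}})$ by a positive quantity proportional to $\|w_{\texttt{RC}}\|^2 \sum_{(i,j)\in E_G} (\text{M}_{\texttt{RC},i} - \text{M}_{\texttt{RC},j})^2$, which is strictly positive whenever the graph contains neighbouring nodes with distinct subgraph-specific factors.

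I would then perform the analogous analysis for the RE branch inside AT. The factor $\mathbb{P} = \texttt{Pow}(\text{M}_{\texttt{RE}}, w_{\texttt{RE}})$ enters as a row-wise multiplicative gain $(\gamma + \mathbb{P})/2$, and since $\text{M}_{\texttt{RE}}$ inherits its node-dependence from $\text{M}_{\texttt{RC}}$, the post-CS features are rescaled differently at different nodes. After CS has removed mean and variance along the batch dimension, this heterogeneous rescaling re-introduces node-dependent amplitude that smoothing cannot remove in a single subsequent step. Combining the two bounds yields the conclusion that across layers SuperNorm continually re-injects subgraph-dependent variation, so the total feature diversity cannot decay to zero; this is the precise sense in which oversmoothing is alleviated.

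The main obstacle will be turning the qualitative re-injection intuition into a quantitative inequality that survives the full pipeline. Concretely, I expect the hardest step to be controlling how much of the RC/RE perturbation persists through the subsequent CS normalization and the next convolution: one must show that the sample variance $\mathbb{D}(\text{H}_{\texttt{RC}})$ stays bounded above (so that CS does not amplify numerical noise while killing structural signal) and below (so that the injected perturbation is not proportionally shrunk back to zero), and that $\tilde{\text{A}}_{\texttt{sym}}$ applied to $\text{H}_{\texttt{AT}}$ does not lie in the kernel with respect to $\text{M}_{\texttt{RC}}$. I would handle this by exploiting the fact that $\text{M}_{\texttt{RC}}$ is determined by strictly local structural invariants (node count, edge density, adjacency spectrum of $S_{v_i}$) that are generically not eigenvectors of $\tilde{\text{A}}_{\texttt{sym}}$, so iterating the argument yields a layer-uniform lower bound on $\mathcal{E}$ and hence a strict alleviation of oversmoothing compared to the vanilla BatchNorm baseline.
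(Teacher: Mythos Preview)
Your proposal takes a genuinely different route from the paper's proof. The paper does not work with a global Dirichlet energy or attempt to track feature diversity through the full CS/convolution pipeline. Instead it gives a short, pairwise argument: fix two nodes $u,v$ whose embeddings have already collapsed to within $\epsilon$, impose explicit assumptions on the learned weights and on the gap between their subgraph-specific factors (e.g.\ $\|w_{\texttt{RC}}\|_2 \geq c_1$ and $\|(\text{M}_{\texttt{RC}}\mathbbm{1}_d^T)_u - (\text{M}_{\texttt{RC}}\mathbbm{1}_d^T)_v\|_2 \geq 2\epsilon/c_1$), and then use the reverse triangle inequality to show that the RC output rows are again at least $\epsilon$ apart; an analogous computation with its own constants handles RE. There is no attempt to control what CS or the next convolution do to this separation, nor any iteration over layers---the claim is simply that each of RC and RE, in isolation and under the stated assumptions, re-separates a collapsed pair.

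What this buys the paper is brevity and concreteness: a few lines of norm inequalities with explicit constants. What your approach would buy, if carried through, is a stronger statement---a layer-uniform lower bound on a global diversity measure that actually survives normalization and smoothing. But you have correctly identified the place where your plan is thin: the ``genericity'' argument that $\text{M}_{\texttt{RC}}$ is not aligned with the smoothing kernel, and the two-sided variance control through CS, are not yet inequalities, and turning them into one is substantially harder than what the paper actually proves. If your goal is to match the paper, the much simpler pairwise triangle-inequality computation under explicit assumptions suffices.
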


\begin{proof} 
Given two extremely similar embeddings of $u$ and $v$ (i.e., $\lVert \text{H}_u - \text{H}_v \rVert_2 \leq \epsilon$). Assume for simplicity that $\lVert \text{H}_u \rVert_2 = \lVert \text{H}_v \rVert_2 = 1$, $\|w_\texttt{RC}\|_2 \geq c_1$,  and the subgraph-specific factors between $u$ and $v$ differs a considerable margin $\|(\text{M}_\texttt{RC} \mathds{1}_d^T)_u-(\text{M}_\texttt{RC} \mathds{1}_d^T)_v\|_2 \geq 2\epsilon/c_1$. Following the RC operation in Eq.(\ref{RC}), we can obtain the following formula derivation:
\begin{equation*}
\begin{split}
  &\lVert (\text{H}_u+{(w_\texttt{RC}\odot(\text{M}_\texttt{RC} \mathds{1}_d^T))}_u\cdot\frac{\text{H}_u+\text{H}_v}{2})\\
  &\qquad - (\text{H}_v+{(w_\texttt{RC}\odot(\text{M}_\texttt{RC} \mathds{1}_d^T))}_v\cdot\frac{\text{H}_v+\text{H}_u}{2}) \rVert_2 \\ 
  &\geq - \|\text{H}_u-\text{H}_v\|_2 + \|({( w_\texttt{RC}\odot(\text{M}_\texttt{RC} \mathds{1}_d^T))}_u\\
  &\qquad -{(w_\texttt{RC}\odot(\text{M}_\texttt{RC} \mathds{1}_d^T))}_v)\cdot\frac{\text{H}_u+\text{H}_v}{2}\|_2  \\
  & \geq -\epsilon + \|w_\texttt{RC}\|_2\cdot\|(\text{M}_\texttt{RC} \mathds{1}_d^T)_u-(\text{M}_\texttt{RC} \mathds{1}_d^T)_v\|_2 \cdot\|\frac{\text{H}_u+\text{H}_v}{2}\|_2\\
  & \geq -\epsilon + 2\epsilon = \epsilon, 
\end{split}
\end{equation*}
where the subscripts $u$, $v$ denote the $u$-th and $v$-th row of matrix $\in \mathds{R}^{n\times d}$. This inequality demonstrates that our RC operation could differentiate two nodes by a margin $\epsilon$ even when their node embeddings become extremely similar after $L$-layer GNNs. Similarly, by assuming $\|\texttt{Pow}{(\text{M}_{\texttt{RE}},{w_{\texttt{RE}})}}_u\|_2\leq c_2$ and $\|\texttt{Pow}{(\text{M}_{\texttt{RE}},{w_{\texttt{RE}})}}_u-\texttt{Pow}{(\text{M}_{\texttt{RE}},{w_{\texttt{RE}})}}_v\|_2\geq (1+c_2)\cdot\epsilon$, we can prove that the RE operation differentiates the embedding with subgraph-specific factor: 
\begin{equation*}
\begin{split}
&\lVert \texttt{Pow}{(\text{M}_{\texttt{RE}},{w_{\texttt{RE}})}}_u\cdot\text{H}_u - \texttt{Pow}{(\text{M}_{\texttt{RE}},{w_{\texttt{RE}})}}_v\cdot\text{H}_v \rVert_2\\
  &= \lVert \texttt{Pow}{(\text{M}_{\texttt{RE}},{w_{\texttt{RE}})}}_u\cdot(\text{H}_u - \text{H}_v) + (\texttt{Pow}{(\text{M}_{\texttt{RE}},{w_{\texttt{RE}})}}_u\\
  &\quad - \texttt{Pow}{(\text{M}_{\texttt{RE}},{w_{\texttt{RE}})}}_v)\cdot\text{H}_v \rVert_2 \\
  & \geq - \lVert \texttt{Pow}{(\text{M}_{\texttt{RE}},{w_{\texttt{RE}})}}_u\cdot(\text{H}_u - \text{H}_v) \rVert_2 + \lVert (\texttt{Pow}{(\text{M}_{\texttt{RE}},{w_{\texttt{RE}})}}_u\\
  &\quad - \texttt{Pow}{(\text{M}_{\texttt{RE}},{w_{\texttt{RE}})}}_v) \cdot\text{H}_v \rVert_2  \\
  & \geq -c_2\cdot\epsilon + (1+c_2)\cdot\epsilon = \epsilon. \\
\end{split}
\end{equation*}

The proof is complete.
\end{proof}

\begin{proposition}
	\label{proposition_stable}
	{RC operation is beneficial to stabilzing the model training.}
\end{proposition}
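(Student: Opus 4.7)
The plan is to formalize the informal claim in two parts: first, that after the RC operation, the per-batch statistics $\mathbb{E}(\text{H}_{\texttt{RC}})$ and $\mathbb{D}(\text{H}_{\texttt{RC}})$ fluctuate less across different random batches than the corresponding statistics of the raw input $\text{H}$; and second, that this reduced fluctuation directly translates to a smaller gap between the batch statistics used at training time and the running statistics used at inference, which is precisely the source of instability that standard BatchNorm exhibits on graph mini-batches.

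The first step would be to fix a probabilistic model over batches $\mathcal{G}=\{G_1,\dots,G_m\}$ and expand
$\text{H}_{\texttt{RC}} = \text{H} + w_\texttt{RC}\odot\text{H}_{\texttt{SA}}\cdot(\text{M}_\texttt{RC}\mathbbm{1}_d^T)$. The key structural observation is that both $\text{H}_{\texttt{SA}}$ and $\text{M}_\texttt{RC}$ are deterministic functions of the graph instance each node belongs to: the former is piecewise constant on each segment because it is the graph-wise average of $\text{H}$, and the latter is the preprocessed subgraph-specific factor that does not depend on the batch composition at all. Consequently, the additive correction term acts as a graph-anchored signal that reasserts each node's graph-level identity on top of the raw pre-normalization feature.

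Next I would compare $\mathbb{E}(\text{H}_{\texttt{RC}})$ and $\mathbb{D}(\text{H}_{\texttt{RC}})$ with $\mathbb{E}(\text{H})$ and $\mathbb{D}(\text{H})$ as random variables over batch sampling. Writing $\mathbb{E}(\text{H}_{\texttt{RC}}) = \mathbb{E}(\text{H}) + w_\texttt{RC}\odot\mathbb{E}(\text{H}_{\texttt{SA}}\cdot(\text{M}_\texttt{RC}\mathbbm{1}_d^T))$ and using the fact that $\text{H}_{\texttt{SA}}$ is constant within a segment, one can invoke a standard within-segment averaging inequality to conclude that the sampling variance $\mathrm{Var}_{\mathcal{G}}(\mathbb{E}(\text{H}_{\texttt{RC}}))$ is bounded above by $\mathrm{Var}_{\mathcal{G}}(\mathbb{E}(\text{H}))$ minus a non-negative term proportional to $\|w_\texttt{RC}\|_2^2$ times the variance of the structural anchor. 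An analogous estimate on the second moment delivers a comparable bound on $\mathbb{D}(\text{H}_{\texttt{RC}})$. Since the BatchNorm running mean and running variance are just exponential averages of these per-batch statistics, tighter per-batch concentration around the running estimates is exactly what formalizes "less reliant on the running means" and "balances the distribution differences among batches" as stated in the paper.

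The hard part will be stating the result rigorously without overclaiming. In particular, I need to specify the right exchangeability assumption on batch sampling so that the cross-term between the raw feature fluctuation and the RC correction vanishes in expectation and does not wipe out the gain; I also need to bound $w_\texttt{RC}$ in norm so that the correction cannot amplify noise rather than dampen it. I expect the cleanest phrasing is as a comparative inequality of the form $\mathrm{Var}_{\mathcal{G}}(\mathbb{E}(\text{H}_{\texttt{RC}})) \leq \mathrm{Var}_{\mathcal{G}}(\mathbb{E}(\text{H})) - \Delta$ with an explicit $\Delta \geq 0$ depending on $\|w_\texttt{RC}\|_2$ and on the variance of the graph-anchored signal $\text{H}_{\texttt{SA}}\cdot(\text{M}_\texttt{RC}\mathbbm{1}_d^T)$, from which the stability conclusion follows as a corollary.
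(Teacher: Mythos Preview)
Your plan diverges from the paper's argument and, more importantly, contains an internal inconsistency that would make the central inequality fail.

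\textbf{The gap.} You want to show
\[
\mathrm{Var}_{\mathcal{G}}\bigl(\mathbb{E}(\text{H}_{\texttt{RC}})\bigr)\;\le\;\mathrm{Var}_{\mathcal{G}}\bigl(\mathbb{E}(\text{H})\bigr)-\Delta,\qquad \Delta\ge 0,
\]
and you propose to get there by assuming the cross-term between the raw fluctuation $\mathbb{E}(\text{H})$ and the RC correction $w_\texttt{RC}\odot\mathbb{E}\bigl(\text{H}_{\texttt{SA}}\cdot(\text{M}_\texttt{RC}\mathbbm{1}_d^T)\bigr)$ vanishes. But if the cross-term vanishes, the variance of the sum is the \emph{sum} of the variances, so you obtain
\[
\mathrm{Var}_{\mathcal{G}}\bigl(\mathbb{E}(\text{H}_{\texttt{RC}})\bigr)=\mathrm{Var}_{\mathcal{G}}\bigl(\mathbb{E}(\text{H})\bigr)+\|w_\texttt{RC}\|^2\cdot(\text{something}\ge 0),
\]
which goes the wrong way. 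Variance reduction from an additive correction requires a \emph{negative} cross-term (a control-variate mechanism), and the only way to get that here is through the correlation between $\text{H}_{\texttt{SA}}$ and $\mathbb{E}(\text{H})$---which you have explicitly assumed away. Moreover, $\text{H}_{\texttt{SA}}$ is the segment-wise mean of the very same $\text{H}$, so the cross-term is structurally nonzero; any exchangeability assumption that kills it will also kill the mechanism you need. There is no ``standard within-segment averaging inequality'' that delivers the minus sign you claim for an \emph{added} anchor term.

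\textbf{What the paper does instead.} The paper does not attempt a variance-over-batches bound at all. It assumes equal graph sizes, computes
\[
\text{H}_{\texttt{Center-RC}}-\text{H}_{\texttt{Center-In}}
= w_\texttt{RC}\odot(\text{M}_\texttt{RC}\mathbbm{1}_d^T)\cdot\bigl(\text{H}_{\texttt{SA}}-\mathbb{E}(\text{H})\bigr),
\]
and then gives a qualitative sign analysis: since $\text{M}_\texttt{RC}>0$ entrywise, the learnable $w_\texttt{RC}$ interpolates between pure batch-running-mean centering ($w_\texttt{RC}\approx 0$) and a centering that leans on the graph-instance statistic $\text{H}_{\texttt{SA}}$ ($|w_\texttt{RC}|$ large). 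The ``stability'' conclusion is a heuristic about information preservation/noise attenuation under the four sign combinations of $w_\texttt{RC}$ and $\text{H}_{\texttt{SA}}-\mathbb{E}(\text{H})$, not a concentration inequality. If you want to match the paper, compute that difference of centered features and argue about the learnable balance; if you want to pursue a genuine variance bound, you must keep the cross-term and exploit its sign via the choice of $w_\texttt{RC}$, rather than assume it away.
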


\begin{proof}
{The RC operation is formulated as
\begin{equation}
	\begin{split}
	\texttt{RC} \; \text{:} \;& \text{H}_{\texttt{RC}}=\text{H}+w_\texttt{RC} \odot \text{H}_{\texttt{SA}} \cdot (\text{M}_\texttt{RC} \mathds{1}_d^T),\\
	\end{split}
\end{equation}
where and $\mathds 1_d$ is $d$-dimensional all-one column vector. Here $\text{H}_{\texttt{SA}}$ introduces the current graph's instance-specific information,~{i.e.}, mean representations in each graph. $w_\texttt{RC}$ is a learnable weight balancing mini-batch and instance-specific statistics. Assume the number
of nodes in each graph is consistent. 
The expectation of input features after~RC,~{i.e.}, $\mathds{E}(\text{H}_{\texttt{RC}})$, can be represented as
\begin{equation}
\begin{split}
\mathds{E}(\text{H}_{\texttt{RC}})=(1+w_\texttt{RC}\odot(\text{M}_\texttt{RC} \mathds{1}_d^T))\cdot \mathds{E}(\text{H}).\\
\end{split}
\end{equation} 
Let us respectively consider the following centering operation of normalization for the original input H and the feature matrix~$\text{H}_{\texttt{RC}}$ after~\texttt{RC} operation,
\begin{equation}
\begin{split}
\text{H}_{\texttt{Center-In}} &=\text{H}-\mathds{E}(\text{H}),\\
\text{H}_{\texttt{Center-RC}} &=\text{H}_{\texttt{RC}}-\mathds{E}(\text{H}_\texttt{RC}),\\
\end{split}
\end{equation}
where $\text{H}_{\texttt{Center-In}}$ and $\text{H}_{\texttt{Center-RC}}$ denote the centering operation on H and $\text{H}_\texttt{RC}$. To compare the difference between these two centralized features, we perform
\begin{equation}
\begin{split}
&\text{H}_{\texttt{Center-RC}}-\text{H}_{\texttt{Center-In}}\\
&= (\text{H}_{\texttt{RC}}-\mathds{E}(\text{H}_\texttt{RC}))-(\text{H}-\mathds{E}(\text{H}))\\
&=\text{H}+w_\texttt{RC}\odot(\text{M}_\texttt{RC} \mathds{1}_d^T) \cdot \text{H}_{\texttt{SA}} - (1+w_\texttt{RC}\odot(\text{M}_\texttt{RC} \mathds{1}_d^T))\cdot \mathds{E}(\text{H})\\
&\quad -(\text{H}-\mathds{E}(\text{H}))\\
&=w_\texttt{RC}\odot(\text{M}_\texttt{RC} \mathds{1}_d^T) \cdot (\text{H}_{\texttt{SA}}-\mathds{E}(\text{H})),\\
\end{split}
\end{equation}
where the values in $\text{M}_\texttt{RC}$ are always positive numbers.} When values in $w_\texttt{RC}$ are close to zero, the centering operation still relies on running statistics over the training set. On the other hand, the importance of graph instance-specific statistics grows when the absolute value of $w_\texttt{RC}$ becomes larger. Here, we ignore the affine transformation operation and assume values larger than the running mean, kept after the following activation layer, are important information for representations, and vice versa. In case of $w_\texttt{RC}>0$, while $\text{H}_{\texttt{SA}}>\mathds{E}(\text{H})$, more important information tends to be preserved, and vice versa. In case of $w_\texttt{RC}<0$,  while $\text{H}_{\texttt{SA}}>\mathds{E}(\text{H})$, the noisy features tend to be  weakened, and vice versa.  Based on the above analysis, Eq.(\ref{RC}) with subgraph-specific factor embedding will not hurt this advantage because all subgraph-specific factors are positive numbers.
		
The proof is complete.
\end{proof}

\section{Experiments}

To demonstrate the effectiveness of the proposed SuperNorm in different GNNs, we conduct experiments on three types of graph tasks, including graph-, node- and link-level predictions.




\noindent
\textbf{Benchmark Datasets.} Eight datasets are employed in three types of tasks, including 
(i)~Graph predictions:  IMDB-BINARY, ogbg-moltoxcast, ogbg-molhiv, and ZINC. (ii)~Node predictions: {Cora}, {Pubmed} and {ogbn-proteins}. (iii)~Link predictions: {ogbl-collab}. 
\begin{figure*}
	\vspace{0.0cm}  
	\setlength{\abovecaptionskip}{0.1cm} 
	\setlength{\belowcaptionskip}{0.0cm} 
	\centering
	
	\hspace{-0mm}
	{\includegraphics[width=4.81cm]{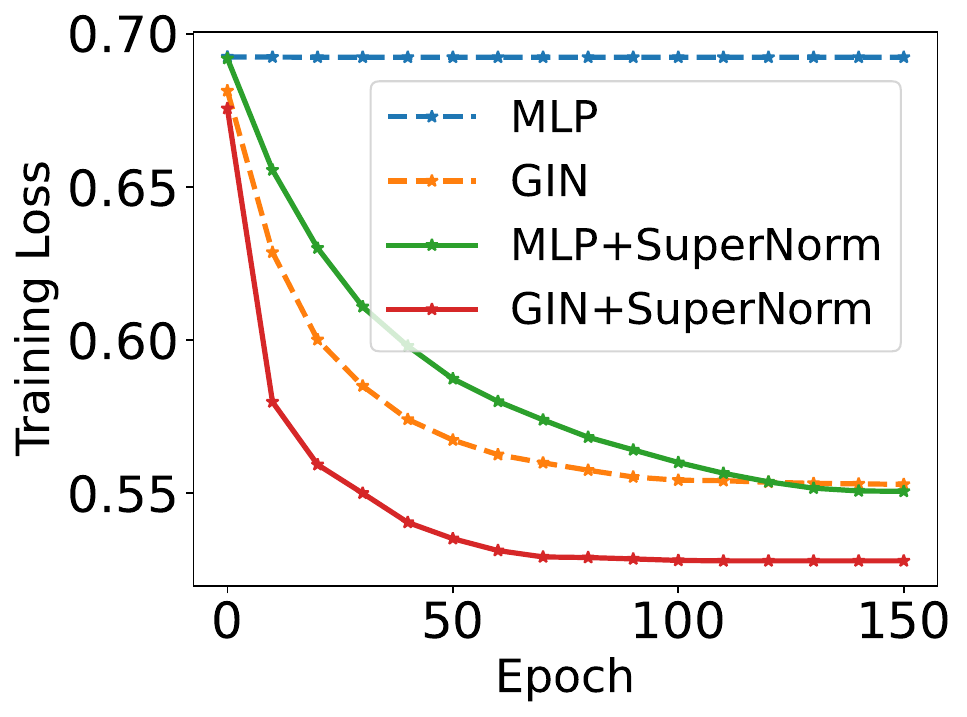}}
	\hspace{0.5cm}
	{\includegraphics[width=4.65cm]{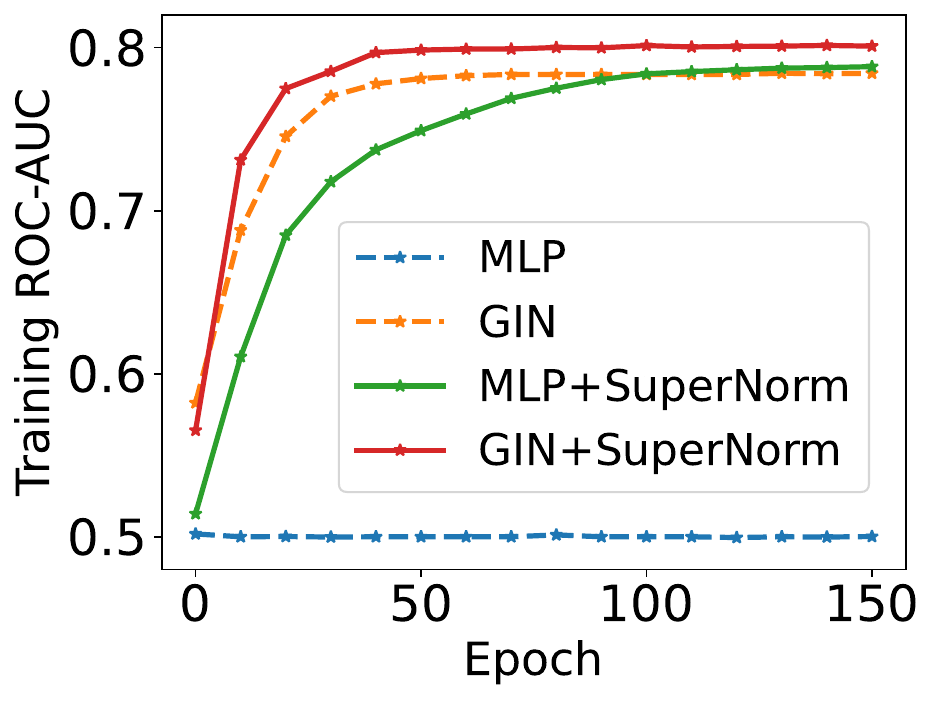}}
	\hspace{0.5cm}
	{\includegraphics[width=4.65cm]{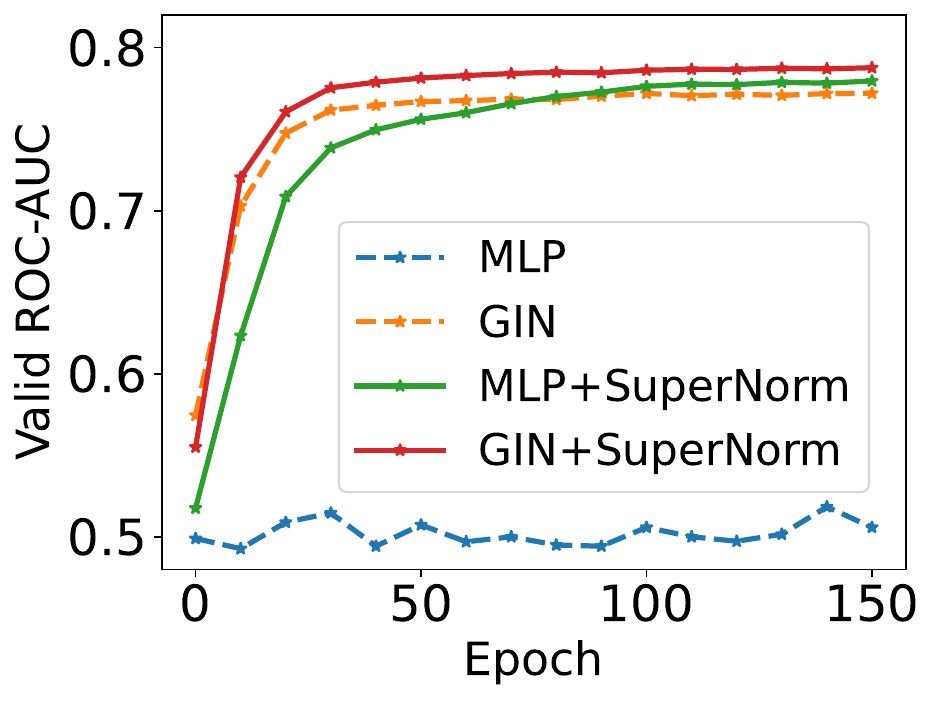}}
	\hspace{-0mm}

	\caption{Learning curves of one-layer {MLP}, {GIN}, {MLP} + {SuperNorm} and {GIN} + {SuperNorm} on {IMDB-BINARY} dataset with various $k$-regular graphs.}
	\label{fig:imdbillustration}
	
	\vspace{-0.2cm}  
\end{figure*}

\begin{table*}
	\vspace{0.0cm}  
	\setlength{\abovecaptionskip}{0.2cm} 
	\setlength{\belowcaptionskip}{0.2cm} 
	\caption{Experimental results on IMDB-BINARY dataset with various $k$-regular graphs. The best results under different backbones are highlighted with \textbf{boldface.}}
	\label{tab:imdbresults}
	\renewcommand{\arraystretch}{1.0} 
	\small
	\centering 
	\resizebox{0.76\textwidth}{!}{
	\begin{tabular}{lccccccc}
	\toprule
		
	&\multicolumn{2}{c}{Normalization} &\multirow{2}{*}{{Layers}}  &\multicolumn{2}{c}{ROC-AUC}	&\multicolumn{2}{c}{LOSS}\\
	
	\cmidrule(l){2-3}
	\cmidrule(l){5-6}
	\cmidrule(l){7-8}
	&\texttt{Batchnorm} &\texttt{SuperNorm}  &	
					&Test Split 			&Valid Split			&Test Split		&Valid Split		\\
	\midrule
	
	\multirow{3}{*}{\texttt{MLP}}
	
	&$-$		&$-$ 			&1	&56.66  $\pm$  1.09 	&57.05  $\pm$  0.81		&0.6961 		&0.6961 		\\  
	&\checkmark	&$-$ 			&1	&56.86  $\pm$  0.72 	&57.83  $\pm$  0.84		&0.6954 		&0.6954			\\
	&$-$		&\checkmark		&1	&\textbf{78.13  $\pm$  0.45}  &\textbf{78.53  $\pm$  0.50} 	
											&0.5645 		&0.5589			\\
	\midrule
	
	\multirow{3}{*}{\texttt{GIN}}
	
	&$-$		&$-$ 			&1	&77.40  $\pm$  0.20 	&77.69  $\pm$  0.13		&0.5684 		&0.5658			\\
	&\checkmark	&$-$ 			&1	&77.71  $\pm$  0.19 	&78.11  $\pm$  0.10		&0.5655 		&0.5593			\\
	&$-$		&\checkmark 	&1	&\textbf{78.22  $\pm$  0.20} 	&\textbf{78.67  $\pm$  0.16}	
											&0.5593 		&0.5588			\\
	\midrule
	
	\texttt{GSN}		&$\checkmark$	&$-$ 		&1	&77.50  $\pm$  0.18 	&77.54  $\pm$  0.16		&0.5696 		&0.5688			\\
	\texttt{GraphSNN}  	&$\checkmark$	&$-$ 		&1	&77.52  $\pm$  0.16 	&77.61  $\pm$  0.18		&0.5691 		&0.5671			\\



	\midrule
	\midrule
	
	\multirow{2}{*}{\texttt{GIN}}
	&\checkmark	&$-$ 			&4	&78.41  $\pm$  0.21 	&78.76  $\pm$  0.17	 	&0.5625 			&0.5590			\\
	&$-$		&\checkmark 	&4	&\textbf{79.52  $\pm$  0.43} 	&\textbf{79.51  $\pm$  0.34}	 	
											&0.5541 			&0.5523			\\
	
	\midrule
	\multirow{2}{*}{\texttt{GCN}}
	&\checkmark	&$-$ 			&4	&76.75  $\pm$  1.31 	&76.96  $\pm$  0.69	 	&0.5755 			&0.5640			\\
	&$-$		&\checkmark 	&4	&\textbf{78.84  $\pm$  0.91} 	&\textbf{79.01  $\pm$  0.83}	 	
											&0.5594 			&0.5511 		\\
	
	\midrule
	\multirow{2}{*}{\texttt{GAT}}
	&\checkmark	&$-$ 			&4	&75.10  $\pm$  1.51 	&75.95  $\pm$  1.27	 	&0.5866 			&0.5852			\\
	&$-$		&\checkmark 	&4	&\textbf{78.73  $\pm$  0.80} 	&\textbf{78.80 $\pm$ 0.76}	 	
											&0.5544 			&0.5451			\\

	\midrule
	\multirow{2}{*}{\texttt{GSN}}
	&$\checkmark$	&$-$ 		&4	&78.90 $\pm$ 0.63 	&79.28 $\pm$ 0.70	 	&0.5555 			&0.5543			\\
	&$-$		&\checkmark 	&4	&\textbf{{79.32 $\pm$ 0.72}} 	&\textbf{{79.71 $\pm$ 0.59}}	 	&{0.5526} 			&{0.5524}			\\
	
	\midrule
	\multirow{2}{*}{\texttt{GraphSNN}}
	&$\checkmark$	&$-$ 		&4	&79.16 $\pm$ 0.67 	&79.35 $\pm$ 0.82		&0.5541 			&0.5530			\\
	&$-$		&\checkmark 	&4	&\textbf{{79.83 $\pm$ 0.75}} 	&\textbf{{79.87 $\pm$ 0.81}}		&{0.5521} 			&{0.5512}			\\

	\bottomrule
	
	\end{tabular}
	}
	\vspace{0.0cm}  
\end{table*}

\noindent
\textbf{Baseline Methods.} We compare our SuperNorm to various types of normalization baselines for GNNs, including {BatchNorm}~\cite{ioffe2015batch}, UnityNorm \cite{chen2022learning}, {GraphNorm}~\cite{cai2021graphnorm}, {ExpreNorm}~\cite{dwivedi2020benchmarking} for graph predictions, and {GroupNorm}~\cite{zhou2020towards},  {PairNorm}~\cite{ZhaoA20}, {MeanNorm}~\cite{yang2020revisiting}, {NodeNorm}~\cite{ZhouDWLHXF21} for node, link predictions.

\noindent
\textbf{More details about datasets, baselines and hyperparameters are provided in the Appendix~\ref{experimentaldetails}. Furthermore, we provide some comparisons with regard to self-supervised leanring, larger datasets and runtime/memory consumption in the Appendix~\ref{additionalexperiment}.}

\begin{figure*}
	\vspace{0.0cm}  
	\setlength{\abovecaptionskip}{0.0cm} 
	\setlength{\belowcaptionskip}{0.0cm} 

	\centering
	\hspace{-1mm}
	{\includegraphics[width=15.8cm]{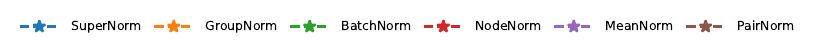}} 
	\hspace{-1mm}
	
	\vspace{-3mm}
	
	\hspace{-2mm}
	\subfigure[Accuracy\label{cora_acc_valid}]{\includegraphics[scale=0.24]{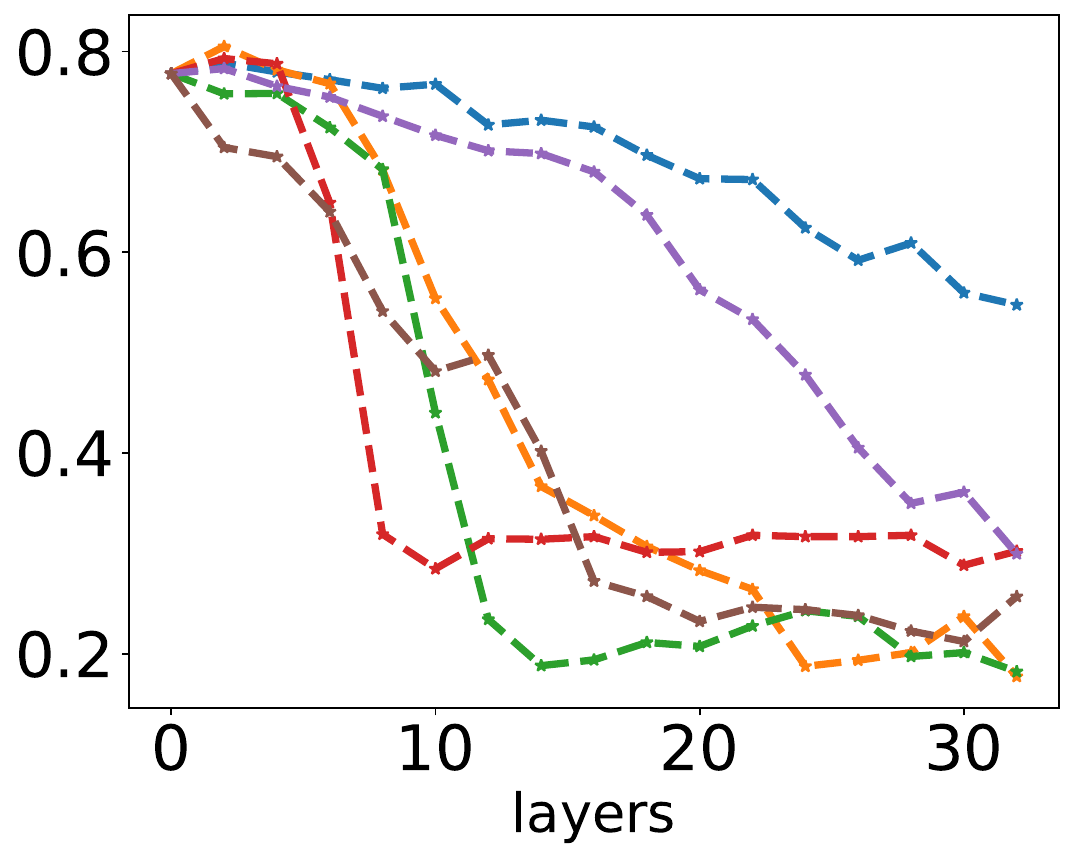}}
	\hspace{5mm}
	\subfigure[Intra-class distance\label{cora_intra_dis}]{\includegraphics[scale=0.24]{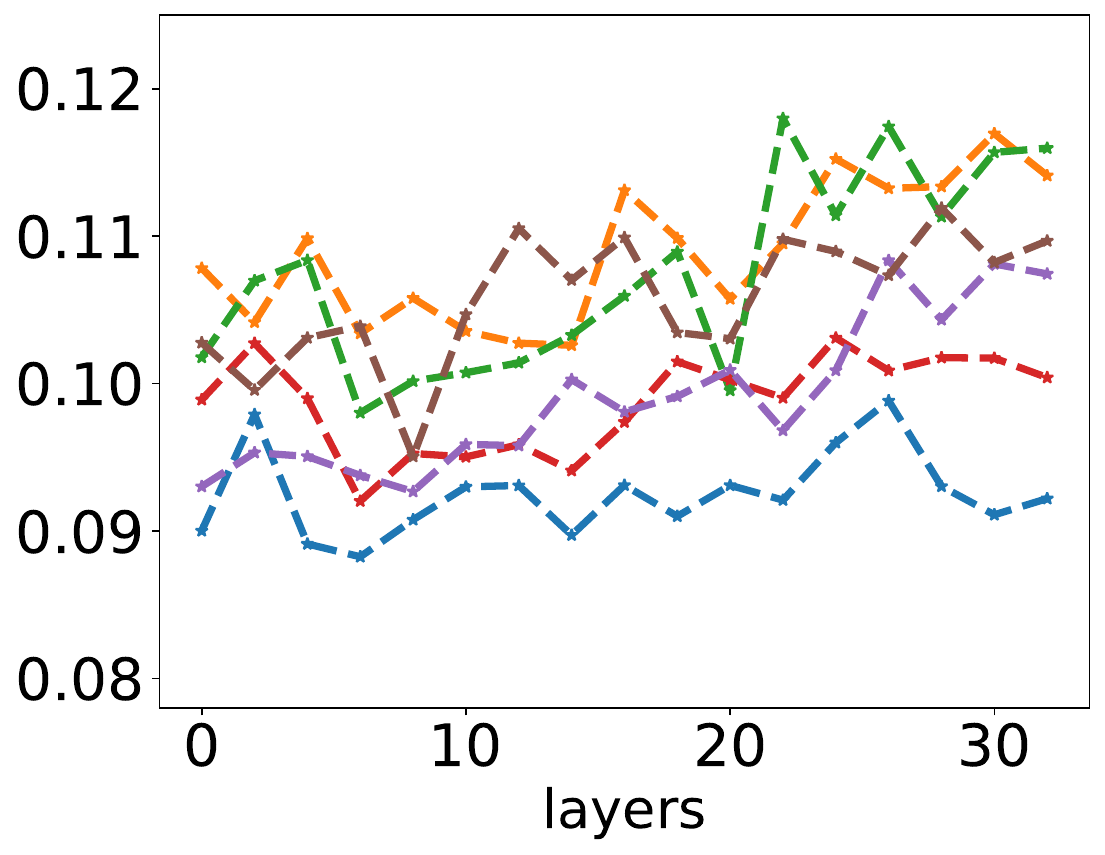}}
	\hspace{5mm}
	\subfigure[Inter-class distance\label{cora_inter_dis}]{\includegraphics[scale=0.24]{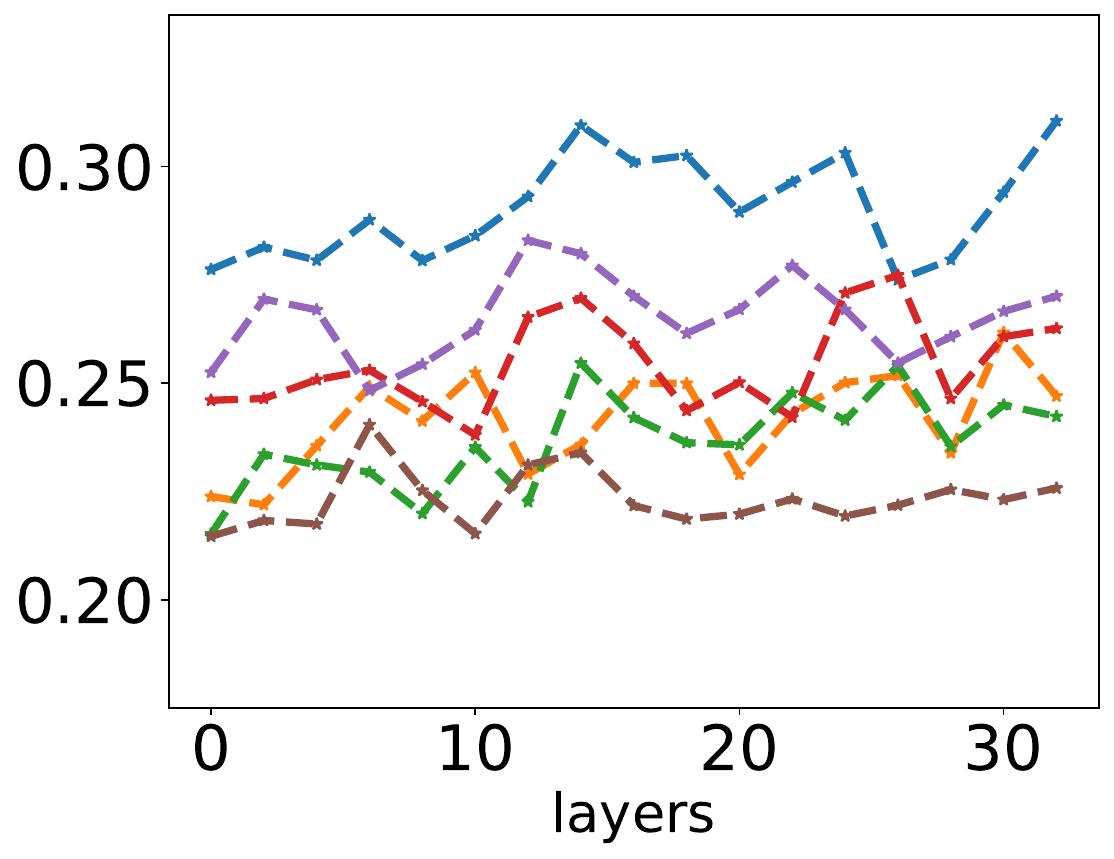}}
	\hspace{-0mm}
	
	\vspace{0mm}
  
	\caption{Experimental results of vanilla GCN by setting layers from 2 to 32 with different normalization methods on Cora dataset.}
	\label{fig:oversmoothing-metrics}

	\vspace{-1mm}	
	
	\hspace{-0mm}
	\subfigure[\texttt{\textbf{SuperNorm,Layer=32}}\label{TSNE2D_cora_GCN_32_motifnorm}]{\includegraphics[scale=0.3]{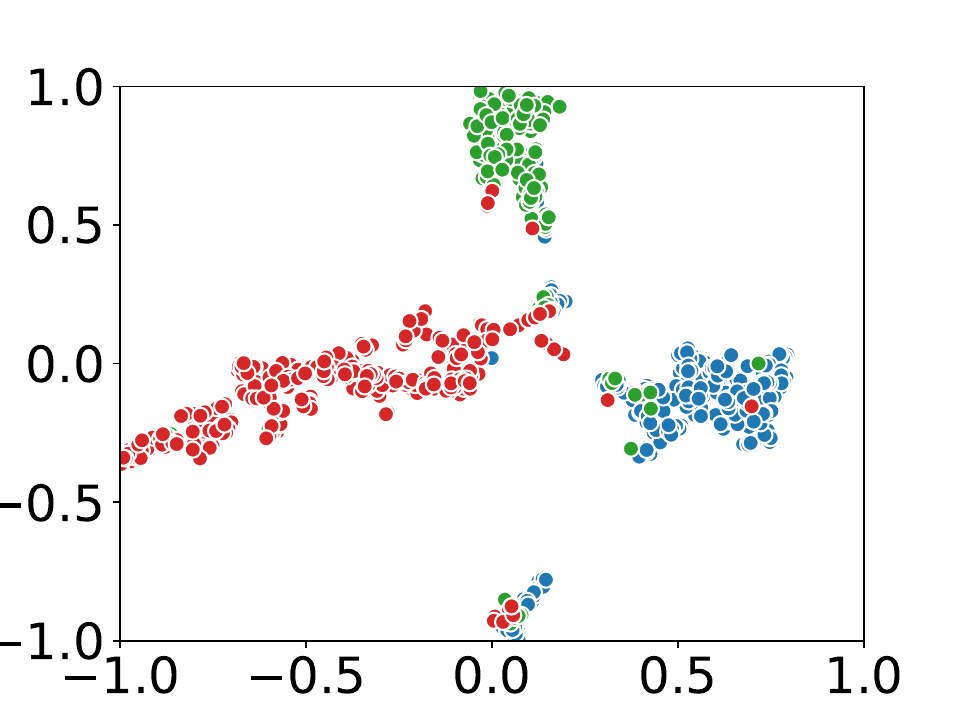}}
	\hspace{2mm}
	\subfigure[\texttt{GroupNorm,Layer=32}\label{TSNE2D_cora_GCN_32_groupnorm}]{\includegraphics[scale=0.3]{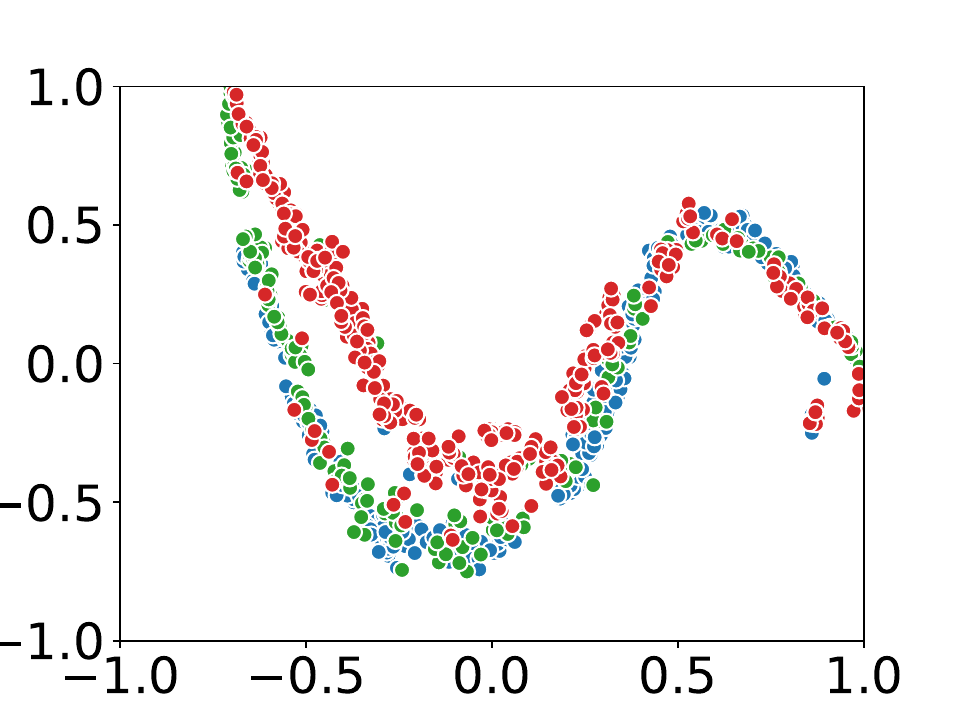}}
	\hspace{2mm}
	\subfigure[\texttt{BatchNorm,Layer=32}\label{TSNE2D_cora_GCN_32_batchnorm}]{\includegraphics[scale=0.3]{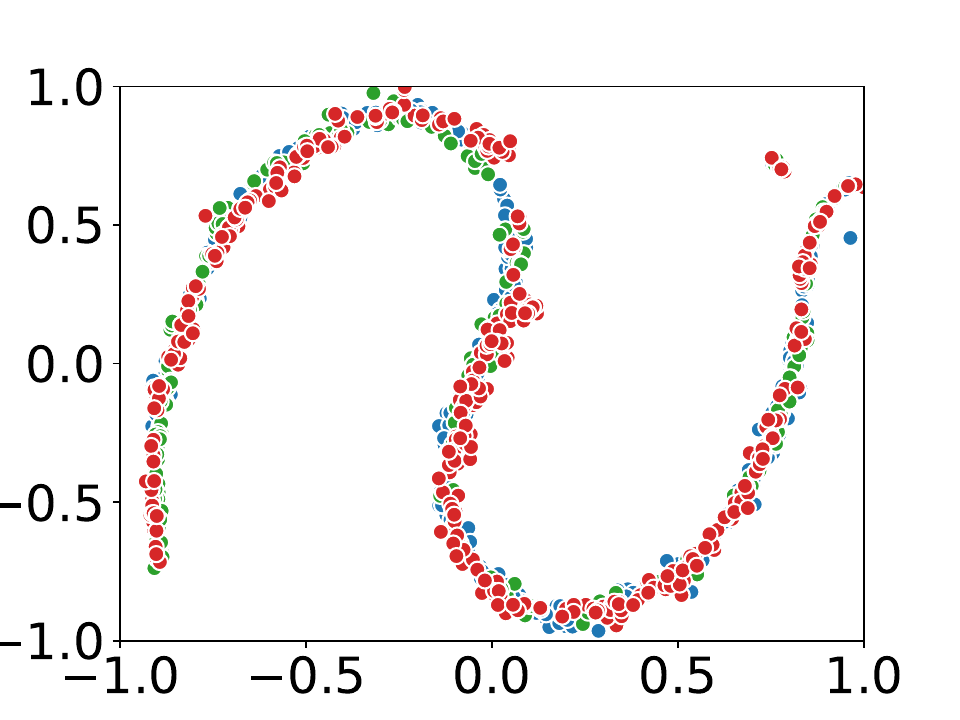}}
	\hspace{-0mm}
  
	\vspace{-1mm}
	
	\hspace{-0mm}
	\subfigure[\texttt{NodeNorm,Layer=32}\label{TSNE2D_cora_GCN_32_nodenorm}]{\includegraphics[scale=0.3]{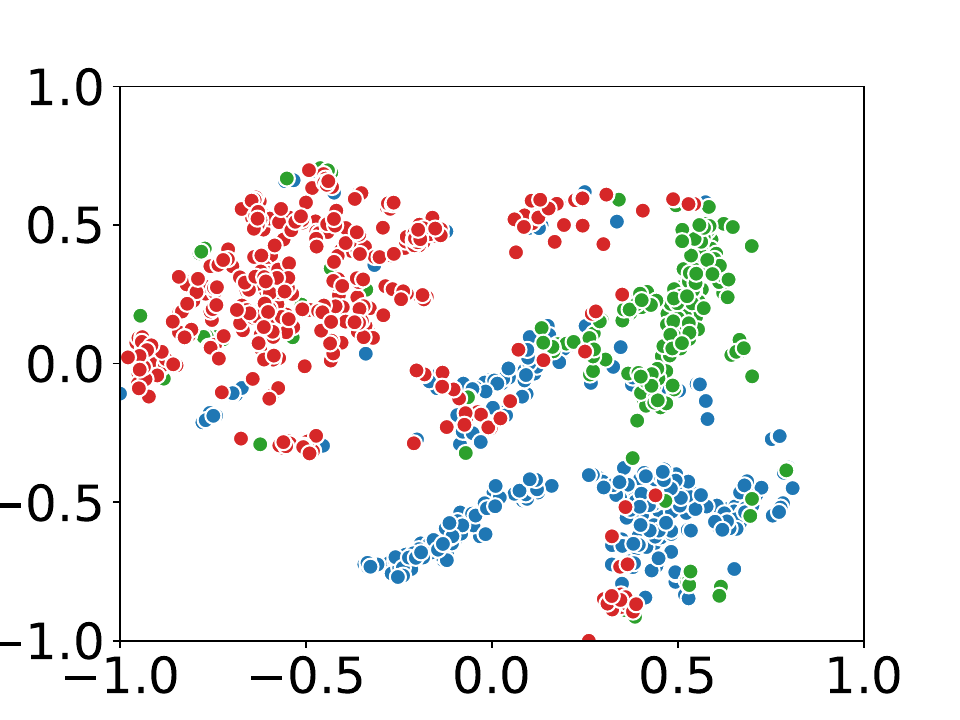}}
	\hspace{2mm}
	\subfigure[\texttt{MeanNorm,Layer=32}\label{TSNE2D_cora_GCN_32_meannorm}]{\includegraphics[scale=0.3]{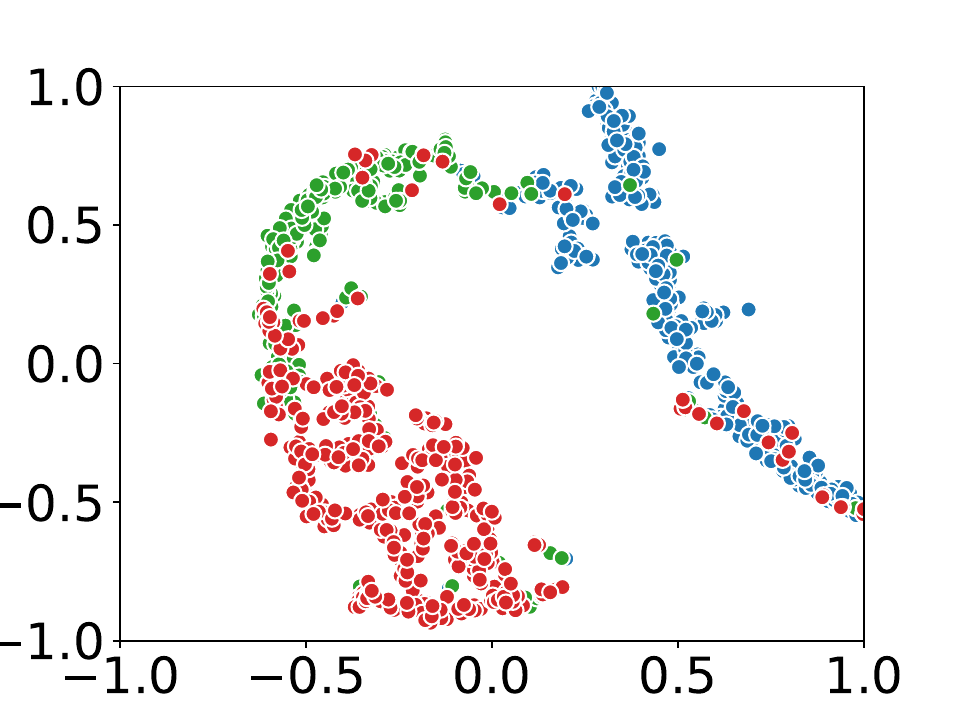}}
	\hspace{2mm}
	\subfigure[\texttt{PairNorm,Layer=32}\label{TSNE2D_cora_GCN_32_pairnorm}]{\includegraphics[scale=0.3]{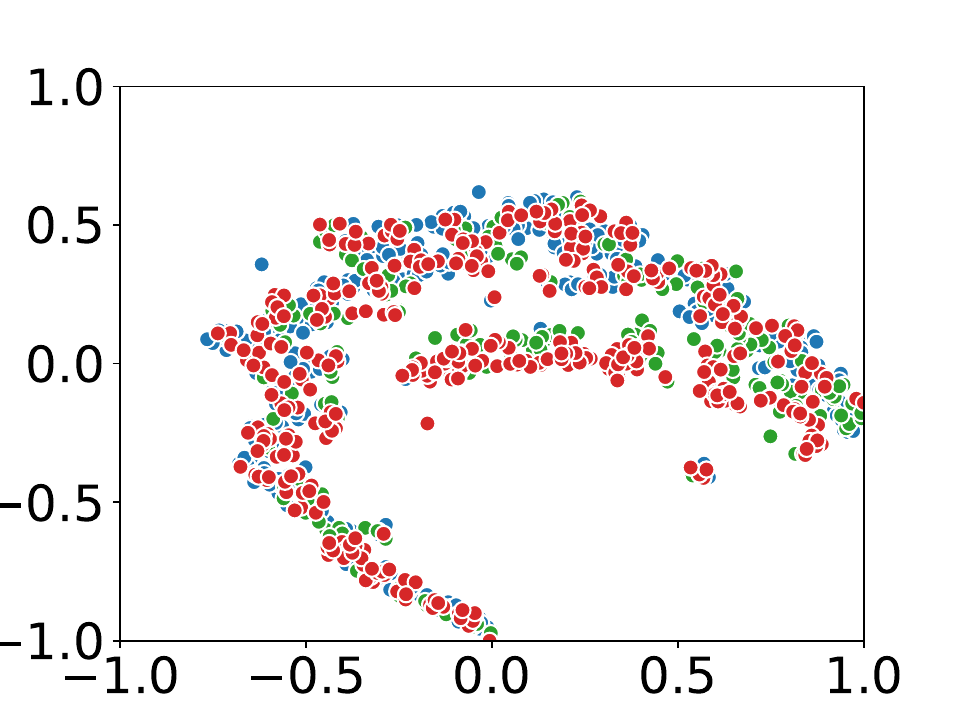}}
	\hspace{-0mm}
	
	\vspace{0mm}
		
	\caption{The t-SNE visualization of node representations using GCN with different normalization methods on Cora dataset.}
	\label{fig:oversmoothing-tsne}

	\vspace{-0.1cm}  
\end{figure*}

In the following experiments, we aim to answer the questions: 
(i) Can {SuperNorm} improve the expressivity for graph isomorphism test, especially go beyond {1}-{WL} on $k$-regular graphs?~(Section~\ref{sec:git_kregular}) 
(ii) Can {SuperNorm} help alleviate the over-smoothing issue?~(Section~\ref{sec:oversmoothing_cora}) 
(iii) Can {SuperNorm} generalize to various graph tasks, including node, link and graph predictions?~(Section~\ref{sec:varioustask})

\subsection{Experimental Analysis on Graph Isomorphism Test}\label{sec:git_kregular}

The IMDB-BINARY is a well-known graph isomorphism test dataset consisting of various $k$-regular graphs, which has become a common-used benchmark for evaluating the expressivity of GNNs.
{To make the training, valid and test sets follow the same distribution as possible}, we adopt a hierarchical dataset splitting strategy based on the structural statistics of graphs~(More detailed descriptions are provided in Appendix~\ref{dataset_method_details}).
For graph isomorphism test, Graph Isomorphism Network~({GIN})~\cite{xu2019powerful} is known to be as powerful as~{1}-{WL}. Notably,~{GIN} consists a neighborhood aggregation operation and a multi-layer perception layer~({MLP}), and this motivates a comparison experiment: comparing a {one-layer}~{MLP}+{SuperNorm} with {one-layer} {GIN} to directly demonstrate~{SuperNorm}'s expressivity in distinguishing $k$-regular graphs. 

As illustrated in Figure~\ref{fig:imdbillustration}, the vanilla MLP cannot capture any structural information and perform poorly, while the proposed SuperNorm method successfully improve the performance of MLP and even exceeds the vanilla GIN.
That is a direct explaination that SuperNorm can improve model's expressivity in distinguishing graph structure.
Furthermore, Table~\ref{tab:imdbresults} provides the quantitative comparison results, where GSN~\cite{bouritsas2022improving} and GraphSNN~\cite{wijesinghe2022new} are two recent popular methods realizing the higher expressivity than the 1-WL. From these comparison results, the performance of one-layer MLP with SuperNorm is better than that of one-layer GIN, GSN, and GraphSNN. Moreover, the commonly used GNNs equipped with SuperNorm, e.g., GCN and GAT, achieve higher ROC-AUC than GIN when the layer is set as 4. GIN with SuperNorm achieves better performance and even goes beyond the GSN and GraphSNN. 
{Furthermore, SuperNorm can further enhance the expressivity of GSN and GraphSNN.}

\subsection{Experimental Analysis on the Over-smoothing Issue}\label{sec:oversmoothing_cora}

To show the effectiveness of~SuperNorm for alleviating the over-smoothing issue in GNNs, we provide the quantitative results by considering three metrics including accuracy, intra-class distance and inter-class distance. 
In details, we set layers from 2 to 32 with the step size as 2 by using vanilla GCN as backbone, and visualize the line 
chart in Figure~\ref{cora_acc_valid}$\sim$\ref{cora_inter_dis}. Figure~\ref{cora_acc_valid}
shows the accuracy with regard to the number of GNNs' layers, which directly demonstrates the superiority of SuperNorm when GNNs go deeper. 
In order to characterize the disentangling capability of different normalizations, we calculate the intra-class distance and inter-class distance with the number of layers increasing in Figure~\ref{cora_intra_dis} and~\ref{cora_inter_dis}. As shown in these two figures, SuperNorm obtains lower intra-class distance and higher inter-class distance, which means that the proposed SuperNorm enjoys better disentangling ability.

Furthermore, we visualize the first three categories of Cora dataset in 2D space for a better illustration. 
We select PairNorm, NodeNorm, MeanNorm, GroupNorm and BatchNorm for 
comparison and set the number of layer as 32. Figure~\ref{TSNE2D_cora_GCN_32_motifnorm}$\sim$\ref{TSNE2D_cora_GCN_32_pairnorm} show 
the t-SNE visualization of different normalization techniques, and we can find that none of them suffer from  the over-smoothing issue. 
However, SuperNorm can better distinguish different categories into different clusters, i.e., the other normalization 
methods may lead to the loss of discriminant information and make the representations indistinguishable. 

\begin{table*}
	\vspace{-0.0cm}  
	\setlength{\abovecaptionskip}{0.1cm} 
	\setlength{\belowcaptionskip}{0.1cm} 
	\caption{Experimental results of different normalization methods without any tricks for graph prediction tasks.  We use GCN, GAT and GIN as the backbones and set the number of layers as 4, 16 and 32.
	The best results on each dataset are highlighted with \textbf{boldface.}
	}
	\label{Tab:GraphResults}
	\renewcommand{\arraystretch}{1.1} 

	\Huge
	\centering 
	\resizebox{1.0\textwidth}{!}{
	\begin{tabular}{clccccccccccccc}
	\toprule
	&\multirow{2}{*}{Methods}
	
	&\multicolumn{3}{c}{ogbg-moltoxcast} 	&\multicolumn{3}{c}{ogbg-molhiv} 	&\multicolumn{3}{c}{ZINC}  \\
	
	\cmidrule(l){3-5}
	\cmidrule(l){6-8}
	\cmidrule(l){9-11}
	
	&						&$l=4$ 	&$l=16$	&$l=32$ 		&$l=4$ 	  &$l=16$    &$l=32$		&$l=4$     &$l=16$	    &$l=32$ 			\\
	\midrule
	\multirow{6}{*}{{\rotatebox{90}{\texttt{GCN}}} }
	&\texttt{NoNorm}	 	
  &61.13 $\pm$ 0.47   &59.34 $\pm$ 0.78   &56.08 $\pm$ 2.36		
  &76.01 $\pm$ 0.92   &71.90 $\pm$ 0.92   &60.59 $\pm$ 2.55	
  &0.643 $\pm$ 0.013  &0.690 $\pm$ 0.014    &0.748 $\pm$ 0.015	\\
	&\texttt{GraphNorm}		
  &60.78 $\pm$ 1.03   &53.75 $\pm$ 0.89   &53.36 $\pm$ 1.08	
  &75.59 $\pm$ 0.99   &65.55 $\pm$ 4.15   &66.49 $\pm$ 1.64		
  &0.592 $\pm$ 0.009  &0.655 $\pm$ 0.029    &1.547 $\pm$ 0.001		\\
	&\texttt{UnityNorm}
	&63.86 $\pm$ 0.97   &61.94 $\pm$ 1.10   &59.18 $\pm$ 0.81        
  &75.94 $\pm$ 0.93   &72.14 $\pm$ 1.16   &69.44 $\pm$ 1.30        
  &0.552 $\pm$ 0.011   &0.576 $\pm$ 0.011   &0.650 $\pm$ 0.032 \\
	&\texttt{ExpreNorm}		
  &64.97 $\pm$ 0.42   &57.91 $\pm$ 0.55   &57.82 $\pm$ 0.30 		
  &76.05 $\pm$ 0.95   &76.75 $\pm$ 1.38   &72.36 $\pm$ 0.50		
  &0.564 $\pm$ 0.009  &0.570 $\pm$ 0.015    &0.646 $\pm$ 0.036		\\
	&\texttt{BatchNorm}		
  &63.39 $\pm$ 1.03   &59.73 $\pm$ 2.73   &53.47 $\pm$ 1.36		
  &76.11 $\pm$ 0.98   &76.62 $\pm$ 1.79   &74.21 $\pm$ 2.28		
  &0.573 $\pm$ 0.016  &0.611 $\pm$ 0.017    &0.655 $\pm$ 0.025		\\
	&\textbf{\texttt{SuperNorm}} 
  &\textbf{67.12 $\pm$ 0.62} &\textbf{64.34 $\pm$ 0.72} &\textbf{63.22 $\pm$ 0.91}  		
  &\textbf{77.86 $\pm$ 1.28} &\textbf{77.70 $\pm$ 1.13} &\textbf{75.46 $\pm$ 1.96}	
  &\textbf{0.483 $\pm$ 0.011} &\textbf{0.534 $\pm$ 0.010} &\textbf{0.531 $\pm$ 0.010}		\\

	\midrule
	\multirow{6}{*}{\texttt{\rotatebox{90}{GAT}} }
	&\texttt{NoNorm}	 	
  &62.61 $\pm$ 0.45   &50.84 $\pm$ 1.40   &50.12 $\pm$ 0.37		
  &76.71 $\pm$ 0.98   &57.38 $\pm$ 5.58   &50.64 $\pm$ 1.71	
  &0.714 $\pm$ 0.043  &1.541 $\pm$ 0.005   &1.547 $\pm$ 0.004	  \\
	&\texttt{GraphNorm}		
  &60.53 $\pm$ 0.56   &52.79 $\pm$ 0.83   &53.22 $\pm$ 1.26 		
  &75.30 $\pm$ 1.21   &73.86 $\pm$ 0.59   &64.03 $\pm$ 5.23		
  &0.576 $\pm$ 0.010  &1.254 $\pm$ 0.324   &1.537 $\pm$ 0.017		\\
	&\texttt{UnityNorm}
	&63.47 $\pm$ 0.67   &58.76 $\pm$ 1.32   &57.13 $\pm$ 1.72  		
  &75.91 $\pm$ 0.91   &76.19 $\pm$ 0.63   &75.46 $\pm$ 1.15     	
  &0.563 $\pm$ 0.012   &0.621 $\pm$ 0.016   &0.777 $\pm$ 0.015  \\
	&\texttt{ExpreNorm}		
  &65.56 $\pm$ 0.55   &57.65 $\pm$ 0.18   &57.60 $\pm$ 0.15		
  &76.99 $\pm$ 0.79   &72.24 $\pm$ 0.69   &72.56 $\pm$ 0.50		
  &0.555 $\pm$ 0.009  &0.562 $\pm$ 0.011   &1.451 $\pm$ 0.001	\\
	&\texttt{BatchNorm}		
  &63.31 $\pm$ 0.50   &53.39 $\pm$ 1.85   &53.24 $\pm$ 0.58		
  &76.07 $\pm$ 0.79   &76.87 $\pm$ 0.56   &73.74 $\pm$ 3.85		
  &0.585 $\pm$ 0.005  &0.624 $\pm$ 0.017   &0.643 $\pm$ 0.018	  \\
	&\texttt{\textbf{SuperNorm}}
	&\textbf{66.57 $\pm$ 1.00} &\textbf{63.68 $\pm$ 0.91} &\textbf{58.46 $\pm$ 1.05} 			
	&\textbf{77.42 $\pm$ 0.91} &\textbf{77.13 $\pm$ 1.01} &\textbf{76.81 $\pm$ 0.94}		
  &\textbf{0.505 $\pm$ 0.010} &\textbf{0.511 $\pm$ 0.009} &\textbf{0.528 $\pm$ 0.013}		\\

  \midrule
  \multirow{6}{*}{\texttt{\rotatebox{90}{GIN}}}
  &\texttt{NoNorm}	 	
  &62.19 $\pm$ 0.36   &56.38 $\pm$ 2.25   &54.83 $\pm$ 0.76 		
  &76.33 $\pm$ 1.05   &69.70 $\pm$ 4.43   &58.87 $\pm$ 4.06		
  &0.496 $\pm$ 0.009   &0.520 $\pm$ 0.010   &1.069 $\pm$ 0.073		\\
  &\texttt{GraphNorm}		
  &62.44 $\pm$ 0.91   &54.95 $\pm$ 0.87   &55.72 $\pm$ 1.07 		
  &76.55 $\pm$ 1.06   &66.00 $\pm$ 1.65   &67.01 $\pm$ 1.40		
  &0.462 $\pm$ 0.012   &1.203 $\pm$ 0.344   &1.446 $\pm$ 0.008		\\
	&\texttt{UnityNorm}
	&64.15 $\pm$ 0.86   &59.00 $\pm$ 1.03   &55.98 $\pm$ 0.86        
  &75.82 $\pm$ 1.45   &68.43 $\pm$ 1.62   &67.24 $\pm$ 1.56        
  &0.442 $\pm$ 0.016   &0.513 $\pm$ 0.011   &1.150 $\pm$ 0.161       \\
  &\texttt{ExpreNorm}		
  &65.98 $\pm$ 0.45   &57.80 $\pm$ 1.75   &56.56 $\pm$ 0.35 		
  &76.23 $\pm$ 1.16   &69.97 $\pm$ 1.74   &70.96 $\pm$ 1.71		
  &0.438 $\pm$ 0.013   &0.482 $\pm$ 0.013   &1.157 $\pm$ 0.194		\\
  &\texttt{BatchNorm}		
  &63.72 $\pm$ 0.57   &58.67 $\pm$ 2.30   &55.56 $\pm$ 0.79 		
  &76.62 $\pm$ 1.06   &70.28 $\pm$ 2.83   &66.82 $\pm$ 2.51		
  &0.477 $\pm$ 0.012   &0.516 $\pm$ 0.012   &1.153 $\pm$ 0.201		\\
  &\texttt{\textbf{SuperNorm}}
  &\textbf{66.86 $\pm$ 1.13} &\textbf{62.87 $\pm$ 0.83} &\textbf{57.53 $\pm$ 0.42} 			
  &\textbf{77.48 $\pm$ 0.98} &\textbf{73.13 $\pm$ 1.13} &\textbf{71.15 $\pm$ 1.33}		
  &\textbf{0.430 $\pm$ 0.011} &\textbf{0.461 $\pm$ 0.009} &\textbf{0.921 $\pm$ 0.093}		\\

	\bottomrule
	
	\end{tabular}
  }
	\vspace{-0.0cm}  
\end{table*}

\begin{table*}
	\centering
	\begin{minipage}{1.0\textwidth}
		\setlength{\abovecaptionskip}{0.1cm} 
		\setlength{\belowcaptionskip}{0.1cm} 
		\caption{The comparison results of different normalization methods without any tricks for node and link prediction tasks by using GCN and GraphSage as the backbone and setting the number of layers as 4, 16 and 32. 
		The best results are highlighted with \textbf{boldface.}
		}
		\label{Tab:NodeLinkResults}
		\renewcommand{\arraystretch}{1.1} 
		\Huge
		\centering 
		\resizebox{1.0\textwidth}{!}{
		\begin{tabular}{clcccccccccccc}
		\toprule
		&\multirow{2}{*}{Settings}
		
		&\multicolumn{3}{c}{Pubmed} 			
		&\multicolumn{3}{c}{ogbn-proteins}  		&\multicolumn{3}{c}{ogbl-collab}\\
		
		\cmidrule(l){3-5}
		\cmidrule(l){6-8}
		\cmidrule(l){9-11}
		&					&$l=4$ 	&$l=16$	&$l=32$ 		&$l=4$ 	  &$l=16$    &$l=32$		&$l=4$     &$l=16$	    &$l=32$ 			\\
		\midrule
		\multirow{8}{*}{\texttt{\rotatebox{90}{GCN}} }
		&\texttt{NoNorm}	 	
    &76.16 $\pm$ 1.23   &54.67 $\pm$ 6.02   &45.58 $\pm$ 2.70 		
    &69.16 $\pm$ 1.69   &63.24 $\pm$ 0.65   &63.15 $\pm$ 0.91		
    &35.38 $\pm$ 0.50   &22.11 $\pm$ 1.07   &15.24 $\pm$ 1.10		\\
		&\texttt{PairNorm}		
    &74.25 $\pm$ 2.34   &56.24 $\pm$ 6.97   &55.13 $\pm$ 4.30 		
    &69.28 $\pm$ 2.30   &63.15 $\pm$ 0.35   &63.00 $\pm$ 0.44	
    &31.26 $\pm$ 2.82   &23.22 $\pm$ 1.69   &14.69 $\pm$ 1.25		\\
		&\texttt{NodeNorm}		
    &76.02 $\pm$ 1.15   &40.87 $\pm$ 1.23   &41.18 $\pm$ 1.39 		
    &70.17 $\pm$ 1.46   &63.50 $\pm$ 0.76   &63.23 $\pm$ 0.88 		
    &27.48 $\pm$ 1.01   &08.48 $\pm$ 0.68   &08.28 $\pm$ 2.49		\\
		&\texttt{MeanNorm}		
    &76.05 $\pm$ 0.80   &73.40 $\pm$ 3.58   &65.34 $\pm$ 6.62		
    &69.14 $\pm$ 1.99   &63.05 $\pm$ 0.38   &62.40 $\pm$ 0.44	
    &33.28 $\pm$ 1.43   &22.56 $\pm$ 2.35   &16.16 $\pm$ 1.22		\\
		&\texttt{GroupNorm}		
    &76.19 $\pm$ 1.31   &63.55 $\pm$ 5.37   &54.84 $\pm$ 6.07		
    &70.25 $\pm$ 2.26   &62.74 $\pm$ 0.62   &63.63 $\pm$ 1.41		
    &35.28 $\pm$ 1.91   &27.41 $\pm$ 2.36   &20.27 $\pm$ 2.24		\\
		&\texttt{BatchNorm}		
    &75.62 $\pm$ 0.69   &48.88 $\pm$ 4.09   &43.28 $\pm$ 3.07 		
    &69.96 $\pm$ 2.14   &67.36 $\pm$ 1.63   &63.86 $\pm$ 1.05		
    &47.57 $\pm$ 0.27   &26.14 $\pm$ 1.19   &21.68 $\pm$ 1.74		\\
		&\texttt{\textbf{SuperNorm}}
		&\textbf{77.11 $\pm$ 1.32} &\textbf{76.86 $\pm$ 1.46} &\textbf{66.41 $\pm$ 2.44} 				
		&\textbf{71.77 $\pm$ 1.37} &\textbf{68.41 $\pm$ 1.28} &\textbf{67.88 $\pm$ 2.50} 		
    &\textbf{50.95 $\pm$ 1.21} &\textbf{47.11 $\pm$ 0.98} &\textbf{45.22 $\pm$ 0.72}	\\
	
    \midrule
    \multirow{7}{*}{\texttt{\rotatebox{90}{GraphSage}} }
    
    &\texttt{NoNorm}	 	
    &76.94 $\pm$ 0.88   &40.65 $\pm$ 3.53   &41.67 $\pm$ 2.07 		
    &66.05 $\pm$ 4.64   &60.56 $\pm$ 0.57   &60.47 $\pm$ 0.11 		
    &25.27 $\pm$ 2.37   &02.08 $\pm$ 4.16   &01.16 $\pm$ 0.78		\\
    &\texttt{PairNorm}		
    &72.78 $\pm$ 1.66   &53.02 $\pm$ 6.98   &45.90 $\pm$ 4.26		
    &62.29 $\pm$ 3.38   &60.53 $\pm$ 0.27   &60.32 $\pm$ 0.67		
    &41.72 $\pm$ 1.25   &16.88 $\pm$ 2.57   &12.44 $\pm$ 2.66	\\
    &\texttt{NodeNorm}		
    &\textbf{77.22 $\pm$ 1.05}   &40.64 $\pm$ 1.66   &40.64 $\pm$ 2.06 		
    &64.48 $\pm$ 3.64   &62.63 $\pm$ 1.19   &61.89 $\pm$ 1.35 		
    &19.74 $\pm$ 2.54   &02.57 $\pm$ 0.51   &02.62 $\pm$ 0.05		\\
    &\texttt{MeanNorm}		
    &76.68 $\pm$ 0.91   &58.70 $\pm$ 1.45   &47.48 $\pm$ 6.86 		
    &63.69 $\pm$ 3.76   &61.03 $\pm$ 6.50   &52.06 $\pm$ 8.40 		
    &46.17 $\pm$ 2.77   &21.54 $\pm$ 2.20   &13.16 $\pm$ 1.40		\\
    &\texttt{GroupNorm}		
    &76.83 $\pm$ 1.06   &40.42 $\pm$ 4.59   &43.49 $\pm$ 1.51 		
    &68.09 $\pm$ 2.61   &61.58 $\pm$ 1.72   &60.60 $\pm$ 0.17 		
    &45.43 $\pm$ 1.87   &23.98 $\pm$ 2.45   &15.43 $\pm$ 1.43		\\
    &\texttt{BatchNorm}		
    &75.49 $\pm$ 1.72   &45.11 $\pm$ 4.63   &42.74 $\pm$ 3.46 		
    &63.75 $\pm$ 3.25   &62.96 $\pm$ 3.03   &61.54 $\pm$ 0.50 		
    &47.05 $\pm$ 1.43   &23.01 $\pm$ 4.11   &14.89 $\pm$ 1.58		\\
    &\texttt{\textbf{SuperNorm}}
    &{77.21 $\pm$ 0.94} &\textbf{74.15 $\pm$ 2.42} &\textbf{73.45 $\pm$ 2.03} 			
    &\textbf{68.11 $\pm$ 1.53} &\textbf{67.14 $\pm$ 2.01} &\textbf{65.17 $\pm$ 2.43} 		
    &\textbf{51.98 $\pm$ 0.87} &\textbf{48.67 $\pm$ 0.71} &\textbf{48.41 $\pm$ 0.64}	\\
    
    \bottomrule
		\end{tabular}
		}
	\end{minipage}

	\vspace{0.2cm}
	\begin{minipage}{0.49\textwidth}
		\centering
		\makeatletter\def\@captype{table}\makeatother
		\setlength{\abovecaptionskip}{0.2cm} 
		\setlength{\belowcaptionskip}{0.2cm} 
		\caption{The comparison results of different norms with empirical tricks on ogbg-molhiv and ZINC datasets.}
		\renewcommand{\arraystretch}{1.05} 
    	\footnotesize
		\centering 
		\label{Tab:GraphResult_Trick}
		\resizebox{0.82\textwidth}{!}{
		\begin{tabular}{clcc}
		\toprule
		&{Methods}				&{ogbg-molhiv} 	&{ZINC} \\
		\midrule
		\multirow{6}{*}{{\rotatebox{90}{{\texttt{GCN}}}}}
		&\texttt{{NoNorm}}	 		&{77.21 $\pm$ 0.430} &{{0.473 $\pm$ 0.006}} 	\\										
		&\texttt{{UnityNorm}}		&{77.56 $\pm$ 1.060} &{{0.458 $\pm$ 0.009}} \\											
		&\texttt{{ExpreNorm}}		&{77.99 $\pm$ 0.545} &{{0.436 $\pm$ 0.008}} 	\\										
		&\texttt{{GraphNorm}}		&{78.10 $\pm$ 1.115} &{{0.396 $\pm$ 0.008}}	\\
		&\texttt{{BatchNorm}}		&{78.07 $\pm$ 0.782} &{{0.398 $\pm$ 0.003}} 	\\
		&\textbf{{\texttt{SuperNorm}}} &\textbf{{78.83 $\pm$ 0.615}} &{\textbf{{0.369 $\pm$ 0.009}}}  	\\
		\bottomrule
		\end{tabular} }
	\end{minipage}\quad
	\begin{minipage}{0.49\textwidth}
		\centering
		\makeatletter\def\@captype{table}\makeatother
		\setlength{\abovecaptionskip}{0.2cm} 
		\setlength{\belowcaptionskip}{0.2cm} 
		\caption{The comparison results of different norms with empirical tricks on ogbn-proteins and ogbl-collab datasets.}
		\renewcommand{\arraystretch}{1.05} 
    \footnotesize
		\centering 
		\label{Tab:NodeLinkResult_Trick}
		\resizebox{0.82\textwidth}{!}{
		\begin{tabular}{clcclccccc}
		\toprule
		&{Methods} 	&{ogbn-proteins} 	&{ogbl-collab}  \\
		\midrule
		\multirow{6}{*}{{\rotatebox{90}{{\texttt{GCN}}}}}
		&\texttt{{PairNorm}}	&{69.84 $\pm$ 0.533} 	&{47.75 $\pm$ 0.190} 		\\
		&\texttt{{NodeNorm}}	&{72.53 $\pm$ 1.514}  	&{48.28 $\pm$ 1.100} 		\\
		&\texttt{{MeanNorm}}	&{71.09 $\pm$ 1.236}  	&{47.27 $\pm$ 0.849}		\\
		&\texttt{{GroupNorm}}	&{73.17 $\pm$ 0.503}	&{45.25 $\pm$ 1.206}		\\
		&\texttt{{BatchNorm}}	&{72.39 $\pm$ 0.611}  	&{49.44 $\pm$ 0.750}		\\
		&\textbf{{\texttt{SuperNorm}}} &\textbf{{73.68 $\pm$ 1.016}} &\textbf{{51.91 $\pm$ 0.874}} 		\\
		\bottomrule
		\end{tabular}}
	\end{minipage}
	\vspace{-0.0cm}
\end{table*}
\begin{figure*}
	\vspace{0.0cm}  
	\setlength{\abovecaptionskip}{0.1cm} 
	\centering
	\hspace{-3mm}
	\subfigure[Training ROC-AUC\label{toxcast_train}]{\includegraphics[scale=0.235]{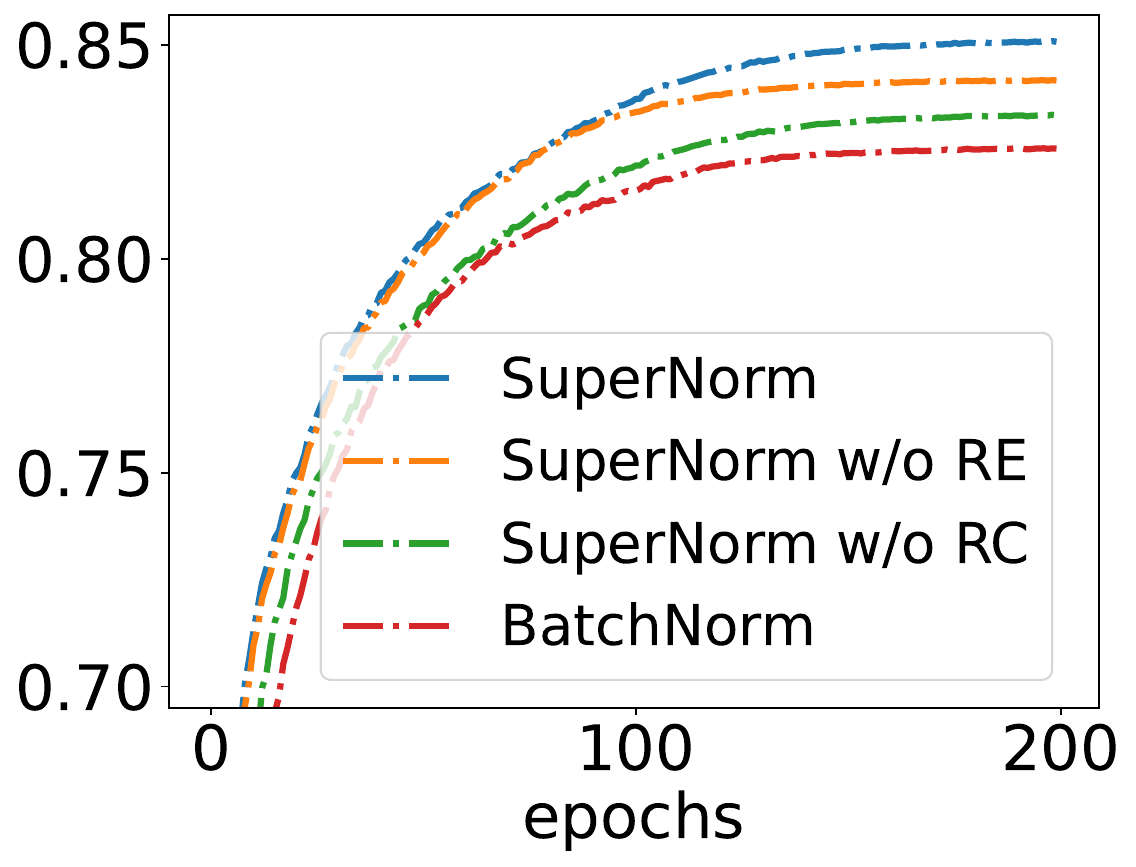}}
	\hspace{5mm}
	\subfigure[Valid ROC-AUC\label{toxcast_valid}]{\includegraphics[scale=0.235]{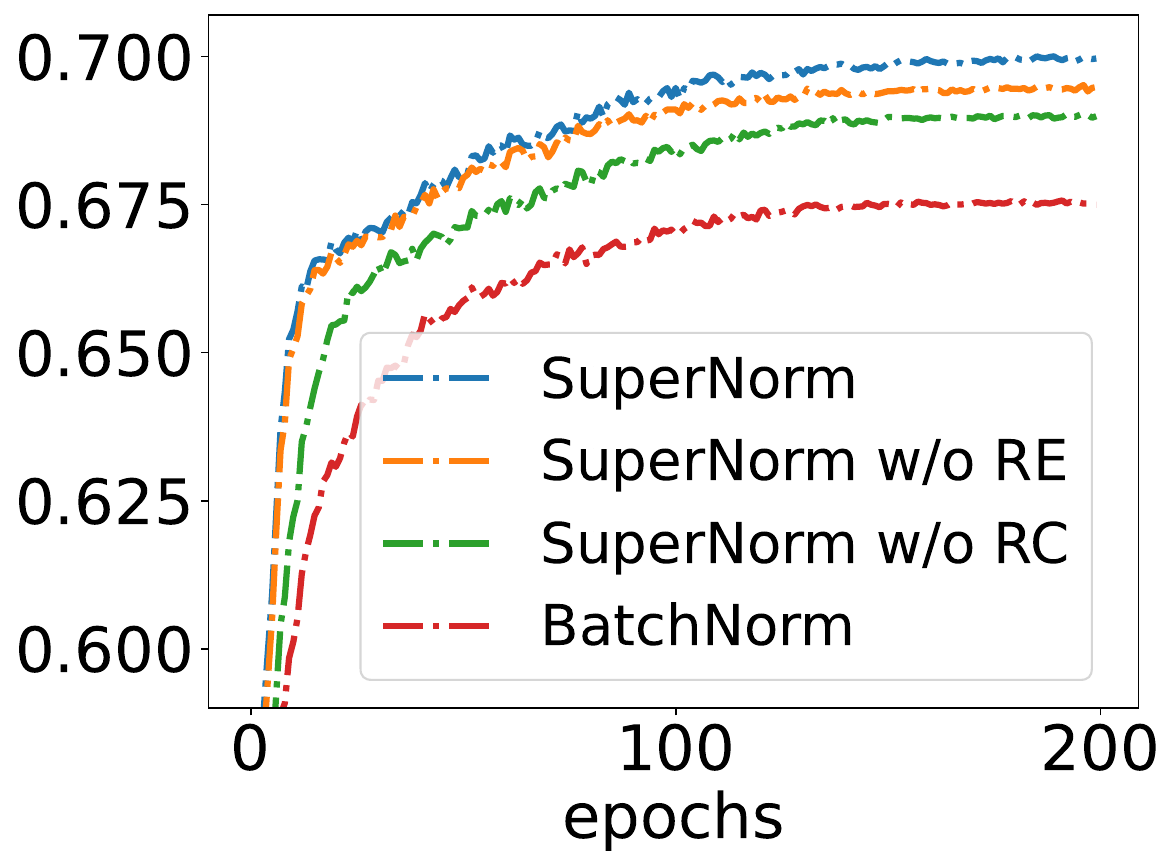}}
	\hspace{5mm}
	\subfigure[Average weight in $w_{\texttt{RC}},w_{\texttt{RE}}$\label{toxcast_trainweight}]{\includegraphics[scale=0.235]{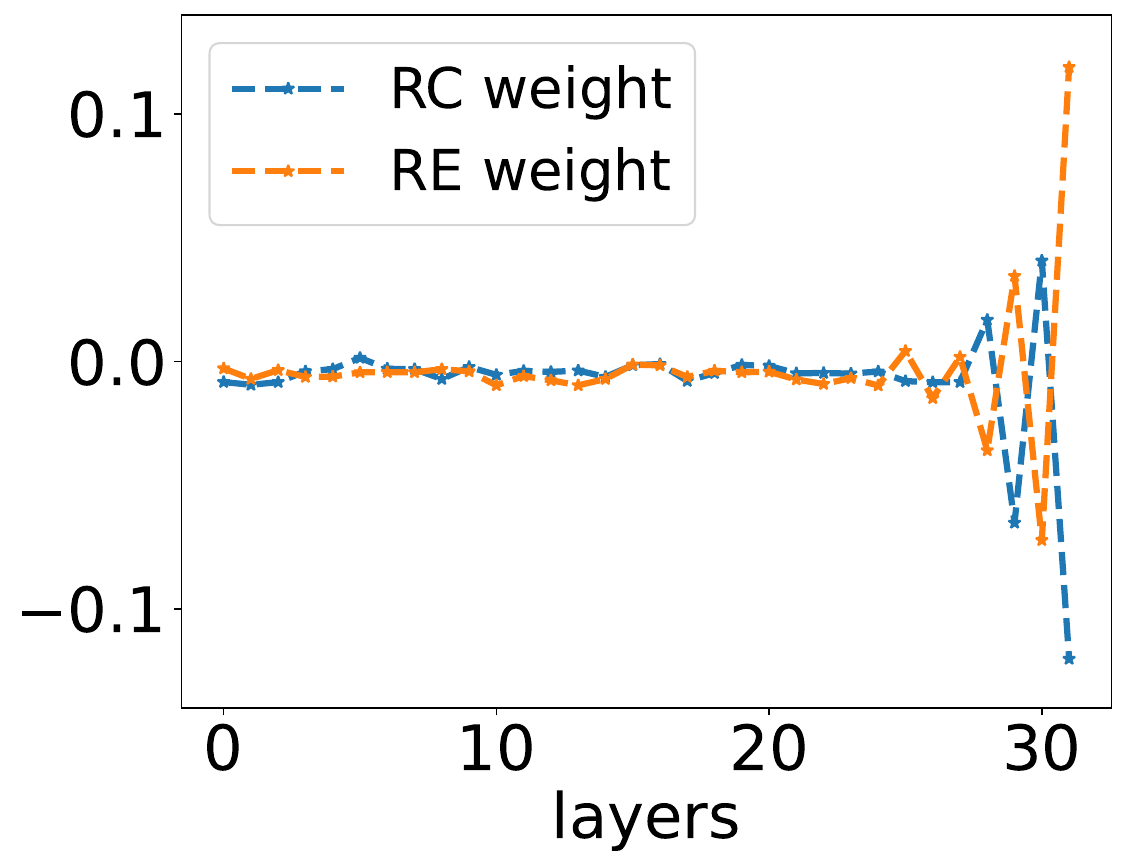}}
	\hspace{-2mm}

	\caption{Abalation study of the Representation Calibration~(RC) and Representation Enhancement~(RE) operations in SuperNorm on the ogbg-moltoxcast dataset. Here, we use GCN as the basic backbone to conduct the abalation study.}
	\label{fig:ablationstudy_toxcast}
	\vspace{-0.2cm}  
\end{figure*}

\vspace{-0.4cm}
\subsection{More Comparisons on the Other Six Datasets}\label{sec:varioustask}

For graph prediction task, we compare normalizations on ogbg-moltoxcast, ogbg-molhiv and ZINC, where ZINC is a graph regression dataset. 
For node and link property predictions we conduct experiments on one social network dataset~(Pubmed), one protein-protein association network dataset~(ogbn-proteins) and a collaboration network between authors~(ogbl-collab). The details are as follow:
\textbf{Firstly}, we adopt the vanilla GNN model without any tricks~(e.g., residual connection, etc.). Accordingly to the mean results~(w.r.t., 10 different seeds) shown in Table~\ref{Tab:GraphResults} and Table~\ref{Tab:NodeLinkResults}, we can conclude that SuperNorm generally improves the graph expressivity of GNNs for graph prediction task and help alleviate the over-smoothing issue with the increase of layers. 
\textbf{Secondly}, we perform empirical tricks in GNNs for a further comparison when generally obtaining better performances. The details of tricks on different datasets: (1) ogbg-molhiv: convolution with edge features, without input dropout, hidden dimension as 300, weight decay in $\{$5e-5, 1e-5$\}$, residual connection, GNN layers as 4. (2) ZINC: without input and hidden dropout, hidden dim as 145, residual connection, GNN layers as 4. (3) ogbn-proteins: without input and hidden dropout, hidden dim as 256, GNN layers as 2. (4) ogbl-collab: initial connection~\cite{chen2020simple}, GNN layers as 4. The results in Table~\ref{Tab:GraphResult_Trick} and Table ~\ref{Tab:NodeLinkResult_Trick} demonstrate that SuperNorm preserves the superiority in graph, node and link prediction tasks compared with other existent normalizations.

\subsection{{Ablation Study} and Discussion}\label{sec:AblationStudy}

{To explain the superior performance of SuperNorm, we perform extensive ablation studies to evaluate the contributions of its two key components, i.e., representation calibration~(RC) and representation enhancement~(RE) operations.} Firstly, we show in Figure~\ref{fig:ablationstudy_toxcast} the ablation performance of GCN on ogbg-moltoxcast. Figure~\ref{toxcast_train} and \ref{toxcast_valid} show the ROC-AUC results with regard to training epochs when the layer number is set to 4, which show that both RC and RE can improve the classification performance. Furthermore, by comparing these two figures, RC performs better than RE in terms of recognition results, which plugs at the beginning of BatchNorm with the graph instance-specific statistics embedded. 

To further explore the significance of RC and RE at different layers, we compute the mean values of $w_{\texttt{RC}}$ and $w_{\texttt{RE}}$, which are visualized in Figure~\ref{toxcast_trainweight}. As can be seen from the mean statistis of $w_{\texttt{RC}}$, $w_{\texttt{RE}}$ among 32 layers' GCN, the absolute values of $w_{\texttt{RC}}$ and $w_{\texttt{RE}}$  become larger when the network goes deeper~(especially in the last few layers), indicating that the structural information becomes more and more critical with the increase numbers of layers.

\begin{table}
	\vspace{-0.0cm}  
	\setlength{\abovecaptionskip}{0.2cm} 
	\setlength{\belowcaptionskip}{0.2cm} 
	\caption{The cost comparisons between BatchNorm and SuperNorm.}
	\label{table:cost_comparison}
	\renewcommand{\arraystretch}{1.1} 
	\normalsize 
	\centering
	\resizebox{0.4\textwidth}{!}{
	\begin{tabular}{lrrr}
	\toprule
							&runtime 	  	&parameter  	&memory     \\
	\midrule
	\texttt{BatchNorm}  	&{15.2s/epoch}			&{291.6K}		&{2305M}		\\
	\texttt{SuperNorm}  	&{22.6s/epoch}			&{293.2K}		&{2347M}		\\

	\bottomrule
	\end{tabular}
	}
	\vspace{-0.4cm}
\end{table}

To evaluate the additional cost of RC and RE operations, we provide the runtime, parameter and memory comparison by using GCN~($l=4$) with BatchNorm and SuperNorm on ogbg-molhiv dataset. Here, we provide the cost of runtime and memory by performing the code on NVIDIA A40. The cost information is provided in Table~\ref{table:cost_comparison}.

\noindent
\textbf{Discussion.} The main contribution of this work is to propose a more expressive normalization module, which can be plugged into any GNN architecture. Unlike existing normalization methods that are usually task-specific and also without substructure information, the proposed method explicitly considers the subgraph information to strengthen the graph expressivity across various graph tasks. In particular, for the task of graph classification, SuperNorm extends GNNs to be at least as powerful as 1-WL test for various non-isomorphic graphs. On the other hand, when the number of GNNs' layers becomes larger, SuperNorm can prevent the output features of distant nodes to be too similar or indistinguishable, which helps alleviate the over-smoothing problem, and thus maintain better discriminative power for the node-relevant predictions. 

\vspace{-1mm}
\section{Conclusion}
  
In this paper, we introduce a higher-expressivity normalization architecture with subgraph-specific factor embedding to generally improve GNNs' expressivities and representatives for various graph tasks. We first present a method to compute the subgraph-specific factor, which is proved to be exclusive for each non-isomorphic subgraph.
Then, we empirically disentangle the standard normalization into two stages, {i.e.}, centering $\&$ scaling (CS) and affine transformation (AT) operations, and elaborately develop two skillful strategies to embed the resulting powerful factor into CS and AT operations. Finally, we provide a theoretical analysis to support that SuperNorm can extend GNNs to be at least as powerful as 1-WL test in distinguishing non-isomorphic graphs and explain why it can help alleviate the over-smoothing issue when GNNs go deeper. Experimental results on eight popular benchmarks show that our method is highly efficient and can generally improve the performance of GNNs for various graph tasks.

\vspace{-1mm}
\section{Acknowledgement}
This work is supported by the Science and Technology Project of SGCC: Hybrid enhancement intelligence with human-AI coordination and its application in reliability analysis of regional power system (5700-202217190A-1-1-ZN).

\bibliographystyle{ACM-Reference-Format}
\balance
\bibliography{sample-base}

\newpage
\appendix

\section{Theorem Analysis}
This section provides the corresponding proofs to support theorems in the main context.

\subsection{Proof for Theorem~\ref{theorem_gnns_power}}\label{theorem_gnns_power_app}

\textbf{Theorem 1.}~\emph{GNNs are as powerful the as 1-WL test in distinguishing non-isomorphic graphs while any two different subgraphs  $S_{v_i}$, $S_{v_j}$ are subtree-isomorphic~(i.e., $S_{v_i}\simeq_{\texttt{subtree}}S_{v_j}$), or GNNs can map two different subgraphs into two different embeddings if and only if $S_{v_i} \not \simeq_{\texttt{subtree}}S_{v_j}$.}


\begin{proof}
We provide theoretical analysis by comparing the difference between the formulation of the message-passing~{GNNs} and the~{1}-{WL} test, and detail as follow:

For graph representation learning using~{GNNs}, node features in a graph are usually learned by following the neighborhood aggregation and representation updating schemes.
In general, the formulation of the message-passing~{GNNs}' convolution can be represented as:
\begin{equation}
\label{GNNs_messagepassing}
\begin{split}
\texttt{GNNs}:\, h_v^{(t)}=\mathcal{M}(h_v^{t-1},\mathcal{A}\{h_u^{t-1}|u \in \mathcal{N}(v) \}),\\
\end{split}
\end{equation} 
where $\mathcal{A}$ is the aggregation function, and $\mathcal{M}$ is a representation updating function. For a clear comparison with~{1}-{WL} test, we copy the formulation of {1}-{WL} from~Eq.(\ref{wltest}) for comparing:
\begin{equation}
\label{wltest_copy}
\begin{split}
\quad \texttt{1}\text{-}\texttt{WL}\,:\,h_v^{(t)}=\texttt{Hash}(h_v^{t-1},\mathcal{A}\{h_u^{t-1}|u \in \mathcal{N}(v) \}).\\
\end{split}
\end{equation}
Let us compare the formulation of above Eq.(\ref{GNNs_messagepassing}) and Eq.(\ref{wltest_copy}), and we can find that the difference between the two equations is that the hash function \texttt{Hash}($\cdot$) and updating function $\mathcal{M}$($\cdot$). 
Here, the updating function~$\mathcal{M}$ in~{GNNs} is not always injective, so that may transform two different samples into the same representation, which is the main reason that~{GNNs} are at most as powerful as the~{1}-{WL} test for graph isomorphic issues. To prove this theorem, we take the subtree-isomorphic issue (\emph{e.g.}, two node-induced subgraphs, $S_{v_i}$, $S_{v_i}$) for example and exemplify as follow:
\begin{itemize}[leftmargin=*]
\item If $S_{v_i}\simeq_{\texttt{subtree}}S_{v_j}$, the simple neighborhood aggregation operation just concerns 1-hop information, resulting in both GNNs and~{1}-{WL} test can not distinguish these two substructures. In this case,~{GNNs} are as powerful as~{1}-{WL} test in distinguishing non-isomorphic graphs.

\item If $S_{v_i}\not\simeq_{\texttt{subtree}}S_{v_j}$ and $\mathcal{M}$ map two different substructures into different representations~(\emph{i.e.}, $f(S_{v_i})\neq f(S_{v_j})$), which means that $\mathcal{M}$ distinguish these two substructures like~\texttt{Hash} in~{1}-{WL}. In this case,~{GNNs} are as powerful as~{1}-{WL} test in distinguishing non-isomorphic graphs.

\item If $S_{v_i}\not\simeq_{\texttt{subtree}}S_{v_j}$, the neighborhood aggregation operation can obtain two different multisets. However, $\mathcal{M}$ may transform two different multisets into the same representation, which means that~{GNNs} are~\textbf{not} as powerful as~{1}-{WL} test in distinguishing non-isomorphic graphs.

\end{itemize}
To this end, we analyzed the existing conditions of~{GNNs} as powerful as~{1}-{WL}. The first and second items are the statement of Theorem~\ref{theorem_gnns_power}. 

The proof is complete.
\end{proof}

\subsection{Design of representation calibration factor}\label{RC_factor_app}
Here, we talk about the design of representation calibration factor $\text{M}_\texttt{RC} = \text{M}_\texttt{SN} \cdot \text{M}_\mathcal{G}$, which is a normalization for subgraph-specific factor $\text{M}_\mathcal{G}$. If we directly adopt the original weights, existing many unequal large values, it will make training oscillating. Thus, the normalization is essential for $\text{M}_\mathcal{G}$. However, if we just perform summation-normalization in an arbitrary graph ({i.e.}, $\text{M}_\texttt{SN}$),  it will not distinguish two graphs with the same nodes but different degrees,~{e.g.}, four graphs in Figure~\ref{fig:diffk} where each weight will become 1/8. To this end, we design the above normalization technique to strengthen the subgraph power for the representation calibration.

\begin{figure}[H]
	\vspace{-0.0cm}  
	\setlength{\abovecaptionskip}{0.1cm} 
	\centering
	\hspace{-3mm}
	{\includegraphics[width=8.5cm]{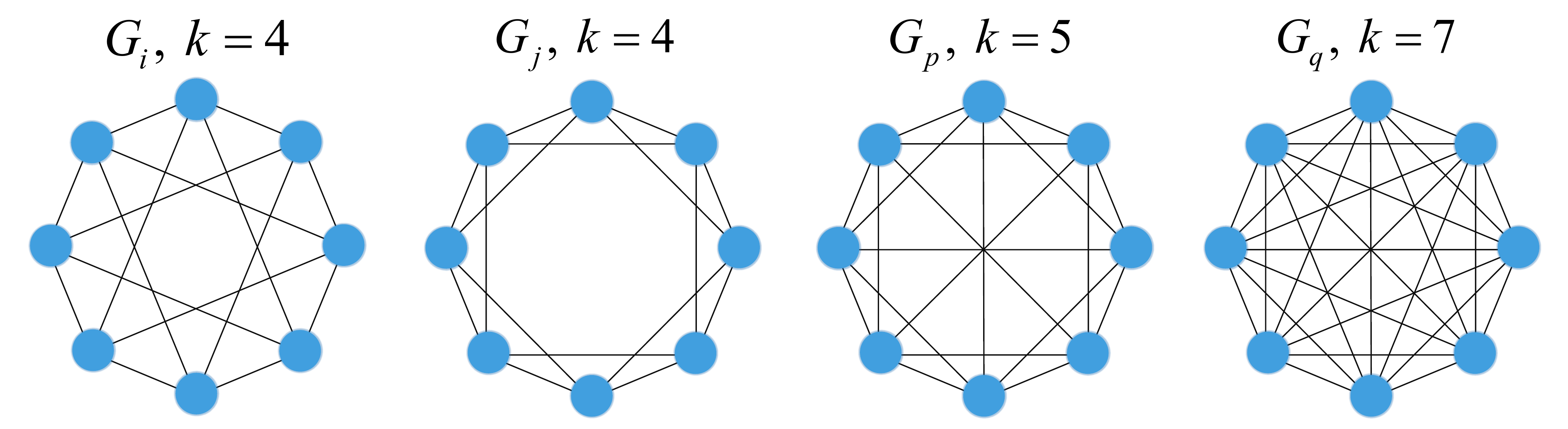}}
	\hspace{-3mm}
	
	\caption{The illustration of four $k$-regular graphs with the same nodes but different structures. When directly performing the summation-normalization operation on subgraph-specific, all weights will be equal to 1/8.
	}
	\label{fig:diffk}
	\vspace{-0.2cm}  
\end{figure}

\section{Experimental Details}\label{experimentaldetails}

\subsection{More Details of Benchmark Datasets and Baseline Methods}\label{dataset_method_details}
\textbf{Benchmark Datasets.} For the task of graph property prediction, we select {IMDB-BINARY}, {ogbg-moltoxcast},
{ogbg-molhiv} and {ZINC} datasets. The {IMDB-BINARY} is a $k$-regular dataset for binary classification task, which means that each node has a degree $k$.
The {ogbg-moltoxcast} is collected for multi-task task, where the number of the tasks and classes are 617 and 37 respectively.
The {ogbg-molhiv} is a molecule dataset that used for binary classification task, but the ouput dimension of the end-to-end GNNs is 1 because its metric is ROC-AUC. 
The {ZINC} is the real-world molecule dataset for the example reconstruction. In this paper, we follow the work in~\cite{dwivedi2020benchmarking} to use {ZINC} for the task of graph regression.  These graph prediction datasets are from~\cite{morris2020tudataset,hu2020open,irwin2012zinc} respectively. For the node level prediction, we select four benchmark datasets including {Cora}, {Pubmed} and {ogbn-proteins}. The first two datasets are the social network and the last one is a protein-protein association network dataset. For the evalutation of link property prediction, we select {ogbl-collab} dataset in this paper. These node and link prediction datasets are from~\cite{kipf2016semi,hu2020open} respectively. More details are provided in Table \ref{Tab:datadetails_app}.


\noindent
\textbf{Experiment Setting.} For different datasets, we provide more detailed statistics information in Table~\ref{Tab:datadetails_app}. 
The embedding dimension in each hidden layer on all datasets is set as 128. We optim the GNNs' architectures using {torch.optim.lr$\_$scheduler.ReduceLROnPlateau} by setting patience step as $10$ or $15$ to reduce learning rate. The learning rate is $1e-3$ for graph classification, and $1e-2$ for node, link predictions. 
When the learning rate reduces to $1e-5$, the training will be terminated. More detailed statistics of experimental settings are provided in Table~\ref{Tab:experimentsetting_app}. Specially, we split the IMDB-BINARY dataset into train-vallid-test format using a hierarchical architecture. In details, we first segment this dataset according to the edge density information into ten set, i.e., the edge density  $\in \{0.0-0.1\},\cup,\{0.1-0.2\}...,\{0.9-1.0\}$, and then sort the graphs using the average degree information. Finally, we select the samples in each segment using a fix step size as valid and test samples. The statistic information for splitting the valid and test set of {Label-0} and {Label-1} on {IMDB-BINARY} is provide in Table~\ref{Tab:imdb_split_app}. By adopting this splitting scheme, distribution differences among train, valid and test sets are weakened~(Experiment results show this contribution fact but without a theoretical basis now). To reproduce the comparison results using a single layer of {MLP} and {GIN}, the dropout is set to 0.0 and warming up the learning rate from 0.0 to $1e-3$ at the first 50 epoches. When layer is equal to 4, the doupout at the input layer is selected in $\{0.3,0.4,0.5\}$, and hidden layer is set to 0.5. To draw the Figure \ref{fig:imdbillustration}, we remove the warmup operation for learning rate. 
The $p$ in \texttt{PloyHash} is selected in $\{0.01-0.09\}$ with step as 0.01.

\begin{table}[H]
	\vspace{-0.2cm}  
	\setlength{\abovecaptionskip}{0.2cm} 
	\setlength{\belowcaptionskip}{0.2cm} 
	\caption{The statistics of eight benchmark datasets.}
	\label{Tab:datadetails_app}
	\renewcommand{\arraystretch}{1.0} 
	\normalsize 
	\centering 
	\resizebox{0.48\textwidth}{!}{
	\begin{tabular}{llllrrr}
	\toprule
	
	\multirow{2}{*}{Dataset Name}	&\multirow{2}{*}{Dataset Type}	&\multirow{2}{*}{Task	 Type	}		&\multirow{2}{*}{\#Graphs}	&Avg.  		&Avg. \\
							&		&			&					&\#Nodes 		&\#Edges 	\\
	\midrule
	\texttt{IMDB-BINARY}	 	&\texttt{molecular}	&\texttt{Graph classification}	&1,000 	&19.8 	&193.1 \\
	\texttt{ogbg-toxcast}		&\texttt{molecular}	&\texttt{Graph classification}	&8,576 	&18.8 	&19.3 \\
	\texttt{ogbg-molhiv}		&\texttt{molecular}	&\texttt{Graph classification}	&41,127 	&25.5 	&27.5 \\
	\texttt{ZINC}			&\texttt{molecular}	&\texttt{Graph regression}		&10,000 	&23.2 	&49.8 \\
	\texttt{Cora}			&\texttt{social}	&\texttt{Node classification}	&1		&2,708 	&5,429 \\
	\texttt{Pubmed}			&\texttt{social}	&\texttt{Node classification}	&1 		&19,717 	&44,338 \\
	\texttt{ogbn-proteins}	&\texttt{proteins}	&\texttt{Node classification}	&1 		&132,534 	&39,561,252 \\
	\texttt{ogbl-collab}		&\texttt{social }	&\texttt{Link classification}	&1 		&235,868 	&1,285,465 \\
	\bottomrule
	
	\end{tabular}
	}
	\vspace{-0.2cm}  
\end{table}

\begin{table}[H]
	\vspace{-0.2cm}  
	\setlength{\abovecaptionskip}{0.2cm} 
	\setlength{\belowcaptionskip}{0.2cm} 
	\caption{The detailed settings on various graph tasks.}
	\label{Tab:experimentsetting_app}
	\renewcommand{\arraystretch}{1.0} 
	\small
	\centering 
	\resizebox{0.48\textwidth}{!}{
	\begin{tabular}{lllrrrrrrrr}
	\toprule
	
	{Name}			&Metrics		&Edge Conv.  &Layers 		&Learning Rate  	&Batch Size &InitDim. 	&HiDim.	 &Dropout	\\
	\midrule
	\texttt{IMDB-BINARY}	&ROC-AUC 		&False	&$1,$ $4$ 			&$1e-3$  &$32$ 		&128		&$128$		&0.0, 0.5\\
	\texttt{ogbg-toxcast}	&ROC-AUC 		&False	&$4,$ $16,$ $32$ 	&$1e-3$  &$128$ 	&9		&$128$		&0.5	\\
	\texttt{ogbg-molhiv}	&ROC-AUC 		&False	&$4,$ $16,$ $32$ 	&$1e-3$  &$256$ 	&9		&$128$		&0.5	\\
	\texttt{ZINC}		&MAE 			&False	&$4,$ $16,$ $32$ 	&$1e-3$  &$128$ 	&1		&$128$		&0.5	\\
	\texttt{Cora}		&Accuracy		&False	&$[0;2;32]$ 		&$1e-2$  &$--$ 		&1433		&$128$		&0.5	\\
	\texttt{Pubmed}		&Accuracy 		&False	&$4,$ $16,$ $32$ 	&$1e-2$  &$--$		&500		&$128$		&0.5	\\
	\texttt{ogbn-proteins}&ROC-AUC 		&False	&$4,$ $16,$ $32$ 	&$1e-2$  &$--$ 		&8		&$128$		&0.5	\\
	\texttt{ogbl-collab}	&Hits@50 		&False	&$4,$ $16,$ $32$ 	&$1e-2$  &$64\times1024$&128	&$128$ 		&0.0	\\
	\bottomrule
	
	\end{tabular}
	}
	\vspace{-0.2cm}  
\end{table}

\begin{table}[H]
	\vspace{-0.2cm}  
	\setlength{\abovecaptionskip}{0.2cm} 
	\setlength{\belowcaptionskip}{0.2cm} 
	\caption{The statistics for splitting {IMDB-BINARY}.}
	\label{Tab:imdb_split_app}
	\renewcommand{\arraystretch}{1.0} 
	\normalsize 
	\centering 
	\resizebox{0.48\textwidth}{!}{
	\begin{tabular}{lrrrrrrrrrrrr}
	
	\toprule
		&0.0$-$0.1 & 0.1$-$0.2 & 0.2$-$0.3 & 0.3$-$0.4 & 0.4$-$0.5 & 0.5$-$0.6 & 0.6$-$0.7 & 0.7$-$0.8 & 0.8$-$0.9 & 0.9$-$1.0 &1.0 & Total Num. \\
	\midrule
	\texttt{Label-0}     &1       &17      &60      &89      &49      &117     &52      &12      &15      &7		&81   	&500        \\
	\texttt{Label-1}     &0       &22      &81      &145     &58      &97      &30      &8       &1       &0      	&58		&500        \\
	\cdashline{2-12}
	\texttt{Total Label} &1       &39      &141     &234     &107     &214     &82      &20      &16      &7		&139 		&1000       \\
	
	\midrule
	
	\midrule
	\texttt{Steps}  	&- 	   &9	  	  &8       &7       &6       &5       &4       &3       &2       &2		&6         &       \\
	\cdashline{2-12}
	\texttt{Label-0 Sel.}& 0      &2 	  &14      &28      &16      &46      &30      &8       &14      &6       &26 		&190        \\
	\texttt{Label-1 Sel.}& 0      &4	  &20      &40      &18      &38      &14      &4       &0       &0       &18 		&156        \\
	\cdashline{2-12}
	\texttt{Total Sel.}  & 0      &6       &34		&68      &34      &84      &44      &12	   &14      &6       &44 		&346       \\

	\bottomrule
	\end{tabular}}
	\vspace{-0.2cm}  
\end{table}

\section{Additional experiment}\label{additionalexperiment}

\subsection{Comparisons on larger datasets}
In this subsection, we conduct comparisons on significantly larger datasets such as ogbn-products and ogbn-mag, using the baselines of SIGN~\cite{frasca2020sign} and LEGNN~\cite{yu2022label}, which are reported in the ogb leaderboards.

\begin{table}[H]
	\vspace{-0.1cm}
	\caption{The comparative experiments on larger datasets.}
	\resizebox{0.48\textwidth}{!}{
	\begin{tabular}{llll}
	\toprule
	Method              & ogbn-products  & Method               & ogbn-mag      \\ \midrule
	SIGN                & 0.8063 $\pm$ 0.0032 & LEGNN                & 0.5289 $\pm$ 0.0011 \\ 
	SIGN with SuperNorm & 0.8120 $\pm$ 0.0029 & LEGNN with SuperNorm & 0.5324 $\pm$ 0.0013 \\ \bottomrule
	\end{tabular}}
\end{table}

\subsection{Comparisons in unsupervised/self-supervised settings}

In this subsection, we conduct a comparison on MUTAG dataset by using InfoGraph~\cite{sun2019infograph} and MVGRL~\cite{hassani2020contrastive} as baselines, which were selected in the DIG toolkit~\cite{JMLR:v22:21-0343}.
\begin{table}[H]
	\vspace{-0.1cm}
	\caption{The comparative experiments in unsupervised/self-supervised settings on MUTAG dataset.}
	\resizebox{0.48\textwidth}{!}{
	\begin{tabular}{llll}
	\toprule
	Method              	 & Acc.  & Method               & Acc.     \\ \midrule
	InfoGraph                & 0.8930 $\pm$ 0.0514 			& MVGRL                & 0.8993 $\pm$ 0.0616 \\ 
	InfoGraph with SuperNorm & 0.9037 $\pm$ 0.0607 			& MVGRL with SuperNorm & 0.9068 $\pm$ 0.0671 \\ \bottomrule
	\end{tabular}}
\end{table}

\subsection{Comparisons using more recent graph prediction methods}

In this subsection, we follow the ogb leaderboards and provide additional comparison of recent graph prediction methods including DeepAUC~\cite{yuan2021large} and PAS+FPs~\cite{wei2021pooling} on the ogbg-molhiv dataset.
\begin{table}[H]
	\vspace{-0.0cm}
	\caption{The comparative experiments using recent graph prediction methods on ogbg-molhiv dataset.}
	\resizebox{0.48\textwidth}{!}{
	\begin{tabular}{llll}
	\toprule
	Method              	& Test ROC-AUC  			& Method               & Test ROC-AUC      \\ \midrule
	DeepAUC                	& 83.27 $\pm$ 0.61 & 		PAS+FPs                & 83.98 $\pm$ 0.11 \\ 
	DeepAUC with SuperNorm 	& 83.89 $\pm$ 0.57 & 		PAS+FPs with SuperNorm & 84.41 $\pm$ 0.15 \\ \bottomrule
	\end{tabular}}
\end{table}

\subsection{Comparisons using a low number of GNNs layers}
To further validate the performance in low-layer settings, we present comparative results on the Pubmed dataset by setting the number of layers as 2/3 for GCN and GraphSage.
\begin{table}[H]
	\vspace{-0.0cm}
	\caption{The comparative results on the Pubmed dataset.}
	\resizebox{0.48\textwidth}{!}{
	\begin{tabular}{llllll}
	\toprule
	Method              	& layer =2 			& layer =3			& Method               		& layer =2    		& layer =3 			\\ \midrule
	GCN w/o Norm            & 76.27 $\pm$ 0.95 	& 76.47 $\pm$ 0.89	& GrapgSage w/o Norm       	& 76.73 $\pm$ 0.69 	& 76.98 $\pm$ 0.96 	\\ 
	GCN with SuperNorm 		& 76.65 $\pm$ 0.83	& 77.27 $\pm$ 1.03	& GrapgSage with SuperNorm 	& 77.08 $\pm$ 0.90 	& 77.55 $\pm$ 1.08 	\\ \bottomrule
	\end{tabular}}
\end{table}

\subsection{Runtime and memory consumption}
In this subsection, we provide the runtime and memory consumption. Firstly, we preprocess the subgraph-specific factors using Intel(R) Xeon(R) Gold 6342 CPU 2.80GHz and report the runtime consumption as follows:

\begin{table}[H]
	\caption{The runtime consumption on ten datasets.}
	\Huge
	\resizebox{0.48\textwidth}{!}{
	\begin{tabular}{lllllllllll}
	\toprule
	IMDB-BINARY	 & ogbg-toxcast & ogbg-molhiv   & ZINC       &Cora    	& Pubmed    & ogbn-proteins & ogbl-collab & ogbn-products	& ogbl-collab\\ \midrule
	0.5 mins  	& 0.7 mins 		& 27 mins		& 1.4 mins	&0.05 mins	& 0.3 mins	& 21.3 hours	& 18 hours    & 49.3 hours  	& 33 hours\\ \bottomrule
	\end{tabular}}
\end{table}
\noindent
To some extent, the SuperNorm is not very proficient in handling large-scale dataset tasks by comprehensively considering the performance and the time consumption. For the specific task large scale dataset, it may be necessary to consider and design the normalization module from a new or specific perspective. In this paper, we design the normalization technique with regard to the perspective non-isomorphic test and over-smoothing issue, which may not be suitable for large scale processing.

Secondly, memory consumption for preprocessing is the data-loading consumption, as the model is not loaded. The additional consumption for each node is averaged 4Byte because the value of subgraph-specific factor is the type of float 32.
\end{document}